\DeclarePairedDelimiter{\ceil}{\lceil}{\rceil}
\DeclarePairedDelimiter\floor{\lfloor}{\rfloor}
\newenvironment{proof}[1][Proof]%
  {\smallskip\par\noindent\textbf{#1\,:\ }}%
  {\hspace*{\fill} \rule{6pt}{6pt}\smallskip}
\newenvironment{proof*}[1][Proof]%
  {\medskip\par\noindent\textbf{#1\,:\ }}%
\newtheorem{assumption}{\emph{\textbf{Assumption}}}
\newtheorem{condition}{\emph{\textbf{Condition}}}
\newtheorem{theorem}{\textbf{Theorem}}
\newtheorem{lemma}{\textbf{Lemma}}
\newcommand{\HB}[1]{{\color{red}#1}}
\newcommand{\KB}[1]{{\color{blue}#1}}
\begin{document}
%
\title{Asynchronous Local Computations in \\ Distributed Bayesian Learning}
%
%
%

\author{Kinjal Bhar,~\IEEEmembership{}
        He Bai,~\IEEEmembership{} Jemin George~\IEEEmembership{} and~Carl~Busart~\IEEEmembership{}
\thanks{K. Bhar and H. Bai are with the Mechanical and Aerospace Engineering, Oklahoma State University, Stillwater,
OK 74072, USA.}
\thanks{J. George and C. Busart are with DEVCOM Army Research Lab, Adelphi, MD 20783, USA.}}

%
%

\markboth{}%
{Bhar \MakeLowercase{\textit{et al.}}}
%



\maketitle

\begin{abstract}
Due to the expanding scope of machine learning (ML) to the fields of sensor networking, cooperative robotics and many other multi-agent systems, distributed deployment of inference algorithms has received a lot of attention. These algorithms involve collaboratively learning unknown parameters from dispersed data collected by multiple agents. There are two competing aspects in such algorithms, namely, intra-agent computation and inter-agent communication. Traditionally, algorithms are designed to perform both synchronously. However, certain circumstances need frugal use of communication channels as they are either unreliable, time-consuming, or resource-expensive. In this paper, we propose  gossip-based asynchronous communication {to leverage fast computations and reduce communication overhead simultaneously.} 
We analyze the effects of multiple (local) intra-agent computations by the active agents between successive inter-agent communications. For local computations, Bayesian sampling via unadjusted Langevin algorithm (ULA) MCMC is utilized. The communication is assumed to be over a connected graph (e.g., as in \emph{decentralized learning}), however, the results can be extended to coordinated communication where there is a central server (e.g., \emph{federated learning}). We theoretically quantify the convergence rates in the process. To demonstrate the efficacy of the proposed algorithm, we present simulations on a toy problem as well as on real world data sets to train ML models to perform classification tasks. We observe faster initial convergence and improved performance accuracy, especially in the low data range. We achieve on average 78\% and over 90\% classification accuracy respectively on the Gamma Telescope and mHealth data sets from the UCI ML repository. 
\end{abstract}

\begin{IEEEkeywords}
multi-agent systems, supervised learning, Langevin dynamics, decentralized Bayesian learning, asynchronous sampling methods
\end{IEEEkeywords}

%
\IEEEpeerreviewmaketitle

\section{Introduction}

Distributed learning has received significant attention over the last decade as it posits a way to train a common ML model when the available data is distributed among multiple computing agents, without sharing raw data itself. Distributed algorithms for performing optimization and more recently Bayesian sampling have been proposed to address these needs. {Conventionally, algorithms, such as~\cite{parayil2020decentralized}, are devised to be synchronously implemented by all the agents such as each agent makes an update of its model parameter based on the (weighted) average of its own parameter and those of other agents. The entire process of  inter-agent communications and updates is assumed as a single event simultaneously carried out by all the agents. However, in practice this process constitutes an arbitrary sequence of parameter sharing and reception, and computational update in time~\cite{lamport2019time}. This means that the learning speed of the entire system is limited by the slowest agent if different computing hardware is used. Also, it is difficult for the agents to ascertain when all the agents finish their updates.}

To circumvent such issues, various asynchronous optimization strategies have been proposed~\cite{1104412, 5719290, 9217472}. Furthermore, individual agents are often made to perform \emph{multiple} intra-agent (or local) computations based on its local data before every inter-agent communication~\cite{tsitsiklis1984problems}. This prevents faster agents from sitting idle while the slower agents catch up. However, doing such local computations can lead to convergence to local optima or even completely lose convergence. Theoretical analysis for such situations is fairly limited, especially pertaining to Bayesian sampling methods.    

{In this paper, we propose an asynchronous distributed Bayesian learning framework which provides a fast and efficient way to learn under limited data constraints. The asynchrony is handled by the gossip protocol where the sequence of updates are random and  modeled as the Poisson's process. Each cycle of gossip entails only two active agents making updates. The proposed method employs multiple local computations via ULA with stochastic gradients per cycle. 

We provide theoretical analysis and  convergence guarantees of the proposed algorithm. We treat  the average of the samples generated by each agent as a random process. Both gossip and stochastic gradient introduce stochastic noise in the process. To handle such stochasticities, we employ the Fokker-Planck equation and establish that zero-mean stochastic noises do not affect convergence of the process as long as their second moment is bounded. Furthermore, using multiple computations per cycle is non-trivial to tackle as it can hamper convergence. We derive a bound for the total gradients in each cycle and thereby establish a criterion (controlled by the step size) to ensure that multiple computations do not affect  stability of the algorithm. Additionally, we assume a log-Sobolev inequality (LSI) on the posterior, which is less restrictive than the common log-concave assumption in the literature.} 

The algorithm is formulated such that the active agents can \emph{independently} perform their local computations in each cycle, allowing that the active agents communicate their samples (and some additional information) at the start of each cycle and then perform multiple (but the same number of) local computations independently without having to collaborate any more; at the end of their respective local computational cycle, the agents can simply rest. Since our algorithm is based on Bayesian sampling of the posterior, it is more resilient to over-fitting and the results have an uncertainty measure as well. 

Furthermore, in our approach, we assume a general communication topology: it can either occur over a graph as in decentralized learning or be coordinated by a central server as in federated learning. Our analysis shows bounds on the number of local computations that can be done by any agent to achieve consensus and convergence. Similar analysis for optimization counterparts exist, e.g.,~\cite{berahas2018balancing, nedic2018network}, but to the best of our knowledge, rigorous analysis for (distributed gossip-based) ULA is still lacking. Since the ULA algorithm has an inherent noise term dependent on the gradient step size, establishing bounds on convergence is non-trivial. 


The rest of the paper is organized as follows. We introduce the background concept in Section~\ref{sec:Preliminaries}. Thereafter, we present the proposed algorithm in Section~\ref{sec:Methodology}. The theoretical results and relevant discussion for the proposed algorithm are provided in Section~\ref{sec:Results}. Section~\ref{sec:Experiments}
presents the results of our numerical experiments accompanied with the observations and discussions from the empirical results. We conclude the paper in {Section~\ref{sec:Conclusions}}.

\textbf{Notation}: Let $\mathbb{R}^{n\times m}$ denote the set of $n\times m$ real matrices. For a vector $\bm{\phi}$, $\phi_i$ is the ${i^{th}}$ entry of $\bm{\phi}$. An $n\times n$ identity matrix is denoted as $I_n$. $\mathbf{1}_n$ denotes a $n$-dimensional vector of all ones and $\mathbf{e}_i$ is a $n$-dimensional vector with all $0$s except the $i$-th element being $1$. The $p$-norm of a vector $\mathbf{x}$ is denoted as $\left\| \mathbf{x} \right\|_p$ for $p \in [1,\infty]$. Given matrices $A \in \mathbb{R}^{m\times n}$ and $B \in \mathbb{R}^{p \times q}$, $A \otimes B \in \mathbb{R}^{mp \times nq}$ denotes their Kronecker product. For a graph $\mathcal{G}\left(\mathcal{V},\mathcal{E}\right)$ of order $n$, $\mathcal{V} \triangleq \left\{v_1, \ldots, v_n\right\}$ represents the agents and the communication links between the agents are represented as $\mathcal{E} \triangleq \left\{\varepsilon_1, \ldots, \varepsilon_{\ell}\right\} \subseteq \mathcal{V} \times \mathcal{V}$. Let $\mathcal{A} = \left[a_{i,j}\right]\in \mathbb{R}^{n\times n}$ be the \emph{adjacency matrix} with entries of $a_{i,j} = 1 $ if $(v_i,v_j)\in\mathcal{E}$ and zero otherwise. Denote by $\mathcal{N}_i$ the set of agents lined to $i$ or neighbors of the $i$-th agent. Define $\Delta = \text{diag} \left(\mathcal{A} \mathbf{1}_n \right)$ as the in-degree matrix and $\mathcal{L} = \Delta - \mathcal{A}$ as the graph \emph{Laplacian}. A Gaussian distribution with a mean $\mu\in\mathbb{R}^m$ and a covariance $\Sigma\in\mathbb{R}_{\geq0}^{m\times m}$ is denoted by $\mathcal{N}(\mu,\Sigma)$.

\section{Preliminaries} \label{sec:Preliminaries}

Consider that $n$ agents hold a distributed collection of data sets $\{\bm{X}_i\}_{i=1}^{n}$ where $\bm{X}_i = \{x_i^j\}_{j=1}^{M_i}$ with $x_i^j \in \mathbb{R}^{d_w}$ for $i \in \mathcal{V}$. Each agent has access to only its own data set but not others' data sets and sharing of raw data is prohibited, i.e., $i$-th agent can access only $\bm{X}_i$. We denote the entire collection of data as $\bm{X} = \{\bm{X}_i\}_{i=1}^{n}$. Note that the notation $\bm{X}$ is for representation purposes only, no individual agent has access to the entirety of $\bm{X}$. Since individual agents do not have access to others' data sets, conventional centralized Bayesian sampling methods for training the machine learning models are not implementable. This situation calls for a collaborative learning paradigm where individual agents perform learning of unknown parameters based on their local data and then communicate their learned parameters (either directly over a graph topology or indirectly via a central server) with each other to obtain a (weighted) average as the final model parameters. 

The objective is to train a common model across all agents. Without loss of generality, we assume in the rest of the paper that the inter-agent communication occurs over a graph $\mathcal{G}(\mathcal{V},\mathcal{E})$. Note that indirect communication coordinated via a central server can be viewed as a special case of graph communication where the adjacency matrix reduces to $\mathcal{A} = \mathbf{1}_n \mathbf{1}_n^\top$ and all agents are connected to each other, i.e., $|\mathcal{N}_i| = n-1$ for all $i \in \mathcal{V}$.

\subsection{Bayesian inference} \label{sec:Bayesian_inf}
Bayesian learning provides a framework for \HB{learning} unknown parameters by sampling from a posterior distribution. The posterior distribution of interest here is the probability distribution of the unknown parameter $\bm{w}$ given the data $\bm{X}$, denoted by $p(\bm{w}|\bm{X})$. {We assume that each data point $x_i^j$, $\forall i\in \mathcal{V}$, $\forall j\in \{1, \ldots, M_i\}$) is independent {conditioned on $\bm{w}$}. 
Owing to this assumption, the target posterior distribution $p^*(\bm{w}) \triangleq p(\bm{w}|\bm{X})$ can be expressed as}
\begin{align}
    &p(\bm{w}|\bm{X}) \propto p(\bm{w}) \prod_{i=1}^{n} p(\bm{X}_i|\bm{w}) = \prod_{i=1}^{n} p(\bm{X}_i|\bm{w}) p(\bm{w})^{\frac{1}{n}}. \label{eq:global_posterior}
\end{align}

The objective of the Bayesian inference problem is to draw samples of $\bm{w}$ from $p^*$. Since analytical solutions to $p^*$ are intractable for most practically relevant problems, indirect methods like \emph{Markov Chain Monte Carlo} (MCMC) or \emph{variational inference} are frequently employed. For the purpose of this paper, we use MCMC for sampling $\bm{w}$ from the desired posterior $p^*$. In MCMC a single chain is initiated from (either deterministic or random) initial state and made to evolve based on the present state until its underlying distribution reaches the target distribution. Likewise, multiple chains are drawn and inference is performed based on the final states of all the chains. 
There are various ways to evolve the MCMC states. We make use of \emph{ULA} which is discussed next.

\subsection{Unadjusted Langevin Algorithm (ULA)} \label{sec:Langevin_algo}
The unadjusted Langevin algorithm (ULA) is a discretized algorithm based on the continuous-time Langevin dynamics. Both the Langevin dynamics and the ULA {(with properly chosen diminishing step-sizes)} are known to converge \emph{exponentially} fast to the target distribution. 

We rewrite the target posterior in terms of an \emph{energy function} defined as 
\begin{align}
    p(\bm{w}|\bm{X}) \propto \exp(-E(\bm{w})).\label{eq:energy_fun_def}\end{align} 
The continuous-time Langevin dynamics is a stochastic differential equation (SDE) given by
\begin{align}
    d\bm{w}(t) &= -\nabla E(\bm{w})dt + \sqrt{2} d\bm{B}(t), \label{eq:cont_Lang_dyn}
\end{align}
where $\bm{B}(t)$ is a $d_w$-dimensional Brownian motion. Euler discretization of~\eqref{eq:cont_Lang_dyn} leads to the canonical ULA for centralized data, which takes the following form
\begin{align}
    &\bm{w}_{k+1} = \bm{w}_{k} - \alpha_k \nabla E(\bm{w}_k) + \sqrt{2 \alpha_k} \bm{v}_k, \label{eq:canonical_ULA}
\end{align}
where $\alpha_k>0$ is a (time-dependent) gradient step size, $\bm{v}_{\HB{k}} \sim \mathcal{N}(\mathbf{0}_{d_w}, I_{d_w})$ and $k$ refers to the discrete time indices of the discretization. In essence, ULA merges gradient information $\nabla E(\bm{w})$ with a random Gaussian perturbation $\bm{v}$ (centered at the present state and with a variance dependent on $\alpha_k$) to better explore the sampling space and prevent over-fitting of data and convergence to local minima of $E$. Note that when the algorithm takes bigger gradient leaps, the perturbation is more diffused around the present state and vise-versa. 

A vanilla distributed version of ULA is presented below.
\begin{align}
\begin{split}
    &\bm{w}_{i,k+1} = \bm{w}_{i,k} - \beta_k \sum_{j \in \mathcal{N}_i} \big(\bm{w}_{i,k} - \bm{w}_{j,k}\big) \\
    &\quad - \alpha_k n \nabla E_i(\bm{w}_{i,k}) + \sqrt{2\alpha_kn} \, \bm{v}_{i,k}, \label{eq:canonical_DULA}
\end{split}
\end{align}  
where $\bm{w}_{i,k}$ is the sample of the $i$-th agent at time index $k$, $\beta_k>0$ is (potentially time-dependent) fusion coefficient, $\bm{v}_{i,k} \sim \mathcal{N}(\mathbf{0}_{d_w}, I_{d_w})$, and $\nabla E_i(\bm{w}_{i,k}) = - \nabla \log p(\bm{X}_i|\bm{w}_{i,k}) - \frac{1}{n} \nabla \log p(\bm{w}_{i,k}) = - \sum_{j=1}^{M_i} \nabla \log p(x_i^j|\bm{w}_{i,k}) - \frac{1}{n} \nabla \log p(\bm{w}_{i,k})$ is the local gradient computed by $i$-th agent based on $\bm{X}_i$. In~\eqref{eq:canonical_DULA}, no single entity is needed to access the entire data set $\bm{X}$. Individual agents compute their local gradients $\nabla E_i(\bm{w}_{i,k})$ based on their local data $\bm{X}_i$ and fuse their sample parameters via the term $\beta_k \sum_{j \in \mathcal{N}_i} (\bm{w}_{i,k} - \bm{w}_{j,k})$. Detailed theoretical guarantees for~\eqref{eq:canonical_DULA} have been established in~\cite{parayil2020decentralized}. Under certain assumptions, the convergence rate is shown to be \emph{polynomial}. The decline in rate from exponential (for centralized ULA) to polynomial (for distributed ULA) is a price paid for the distributed implementation.    

\subsection{Gossip-based Protocol} \label{sec:Goss_protocol}
Although~\eqref{eq:canonical_DULA} proposed by~\cite{parayil2020decentralized} tackles the problem of distributed Bayesian learning, it needs synchronous data fusion and gradient update by all agents. This may not be feasible to implement depending on the quality of hardware available. In most practical cases, some discrepancy in synchronous communication are bound to occur. A rather simple way to model asynchrony in the communication over the graph is to assume a gossip protocol. 

To describe the gossip communication, we first introduce the notions of \emph{local clock} and \emph{universal clock}. It is assumed that every agent has a local clock which ticks at rate $1$ of a Poisson process. The ticks of each agent's local clock are independent of each other and no two local clocks tick at the same time. Thus, the ticks of all agents combined (denoted by $k$) corresponds to a \emph{universal clock} which ticks at rate $n$ Poisson process. In gossip, a agent becomes active at the tick of its own local clock (denoted as $i_k$), the probability of which is \emph{uniform} among all $n$ agents. The active agent then uniformly randomly chooses another agent among its neighbors (denoted as $j_k$) to communicate with. Together, these two agents make updates and share their samples for averaging, while all others remain dormant. The probability of any agent $i$ to be active at $k$-th tick of the universal clock is given by $p_i = \frac{1}{n}\left(1+ \displaystyle \sum_{j\in \mathcal{N}_i} \frac{1}{|\mathcal{N}_j|} \right)$.

Denote by $\mathcal{A}_k = \{i_k,j_k\}$ the set of two  active agents at the $k$-th tick of the universal clock and $\tau_i(k)$ the number of times agent $i$ has been active until the $k$-th tick of the universal clock. The update algorithm for the active agents, i.e., $i \in \mathcal{A}_k$, is given by 
\begin{align}
    &\bm{w}_i(\tau_i(k)+1) = \bm{w}_i(\tau_i(k)) - \beta \sum_{j \in \mathcal{A}_k} \left( \bm{w}_i(\tau_i(k)) - \bm{w}_j(\tau_j (k))\right) \nonumber \\
    &\qquad -\frac{n\alpha_k}{p_i} \nabla E_i( \bm{w}_i(\tau_i(k))) + \sqrt{2\alpha_k} \bm{v}_i(\tau_i(k)), \label{eq:ULA_gossip_local}
\end{align}
where $\bm{w}_i(\cdot)$ represents the value at the corresponding local clock index and $\bm{v}_i \sim \mathcal{N}\left( \mathbf{0}_{d_w}, \frac{n^2}{2} I_{d_w} \right)$. We represent ~\eqref{eq:ULA_gossip_local} in the universal clock ticks as
\begin{align}
\begin{split}
    &\bm{w}_{i,k+1} = \bm{w}_{i,k} - \delta_{i,k} \Bigg[\beta \sum_{j \in \mathcal{A}_k} \Big( \bm{w}_{i,k} - \bm{w}_{j,k} \Big) \\
    &\qquad -\frac{n\alpha_k}{p_i} \nabla E_i( \bm{w}_{i,k}) + \sqrt{2\alpha_k} \bm{v}_{i,k} \Bigg], \label{eq:ULA_gossip_local_2}
\end{split}
\end{align}
where the indicator function $\delta_{i,k}$ is defined as 
\begin{align} \label{eq:indicator_fn}
    \delta_{i,k} = 
    \begin{cases}
        1, \quad i \in \mathcal{A}_k \\
        0, \quad i \notin \mathcal{A}_k.
    \end{cases}
\end{align} 
Thus, in~\eqref{eq:ULA_gossip_local}, if $i$ is active at time index $k$, it makes an update accordingly, otherwise, it retains its old value $\bm{w}_{i,k}$. Note that each agent is only aware of its own local clock, hence knows its own $\tau$ value. However, the ticks of the universal clock $k$ is unknown to all agents. The concept of the universal clock has been introduced for analysis purposes only. 

\subsection{Stochastic Gradient} \label{sec:sto_grad}
A well-established tool to speed-up a gradient-based algorithm is to use \emph{stochastic gradient}. In this method, instead of using the entire batch of data to compute the gradient, only a fraction, say $f\times 100\%$, $0<f\leq 1$, of the entire data (\emph{mini-batch}), which is \emph{randomly} chosen from a \emph{uniform} distribution each time, can be used to compute an estimate of the true gradient. Since the computation complexity of the gradient scales roughly as $\mathcal{O}(M)$ (where $M$ is the total training data available), using only $fM$ amount of the training data reduces gradient computation complexity to $\mathcal{O}(fM)$. 

We adapt the distributed ULA in~\eqref{eq:canonical_DULA} for stochastic gradient and obtain 
\begin{align}
\begin{split}
    &\bm{w}_{i,k+1} = \bm{w}_{i,k} - \beta_k \sum_{j \in \mathcal{N}_i} \big(\bm{w}_{i,k} - \bm{w}_{j,k}\big) \\
    &\qquad - \alpha_k n \widehat{\nabla E}_i(\bm{w}_{i,k}) + \sqrt{2\alpha_kn} \, \bm{v}_{i,k}, \label{eq:canonical_DULA_sto_grad}
\end{split}
\end{align}
where the stochastic gradient is given as $\widehat{ \nabla E}_i(\bm{w}_{i,k}) = - \frac{1}{n} \nabla \log p(\bm{w}_{i,k})  - \frac{1}{f_i} \nabla \log p(\hat{\bm{X}}_i|\bm{w}_{i,k}) = - \frac{1}{n} \nabla \log p(\bm{w}_{i,k}) - \frac{1}{f_i} \displaystyle \sum_{j=1}^{m_i} \nabla \log p(x_i^j|\bm{w}_{i,k})$, in which $\hat{\bm{X}}_i = \{x_i^j\}_{j=1}^{m_i} \subset \bm{X}_i$ with $|\hat{\bm{X}}_i|=m_i$ is the mini-batch data set randomly used by the $i$-th agent for computing the stochastic gradient, and $f_i = \frac{m_i}{M_i}$. Note, from implementation perspective, there is almost no difference between~\eqref{eq:canonical_DULA} and~\eqref{eq:canonical_DULA_sto_grad}, except that the gradient calculated in~\eqref{eq:canonical_DULA_sto_grad} the agents use randomly selected fraction of their respective data sets. Thus, stochastic gradient provides computational cost savings at no added implementation difficulty. 




\section{Methodology} \label{sec:Methodology}
In this paper, we focus on obtaining theoretical convergence guarantees for distributed Bayesian algorithm where nodes communicate via a gossip protocol over a graph and make multiple local intra-node ULA computations between successive inter-node communication. We next introduce the proposed algorithm. 

Assume that at $k$-th tick of the universal clock, $\mathcal{A}_k = \{i_k, j_k\}$ are active. The parameter update for $i \in \mathcal{A}_k$ consists of the following steps:
\begin{align}
    \bm{w}_{i,k}^0 &= \bm{w}_{i,k} - \beta \sum_{j \in \mathcal{A}_k} \Big(\bm{w}_{i,k} - \bm{w}_{j,k} \Big), \label{eq:fusion} \\
    \begin{split}
    \bm{w}_{i,k}^1 &= \bm{w}_{i,k}^0 - \frac{n\alpha_k}{p_i} \widehat{\nabla E}_i(\bm{w}_{i,k}^0) + \sqrt{ \alpha_kn^2}\,\, \bm{v}_{i,k}^0, \\
    \bm{w}_{i,k}^2 &= \bm{w}_{i,k}^1 - \frac{n\alpha_k}{p_i} \widehat{\nabla E}_i(\bm{w}_{i,k}^1) + \sqrt{\alpha_kn^2}\,\, \bm{v}_{i,k}^1, \\
    \vdots \\
    \bm{w}_{i,k}^{T} &= \bm{w}_{i,k}^{T-1} - \frac{n\alpha_k}{p_i} \widehat{\nabla E}_i(\bm{w}_{i,k}^{T-1}) + \sqrt{\alpha_kn^2}\,\, \bm{v}_{i,k}^{T-1}, 
    \end{split} \label{eq:mul_comp} \\
    \bm{w}_{i,k+1} &\equiv \bm{w}_{i,k}^{T}. \label{eq:final_update}
\end{align}
{In the above notation of $\bm{w}_{i,k}^j$, $i$ denotes the $i$-th node, $k$ denotes the ticks of the universal clock (pertaining to a single cycle), and $j$ is the index for local intra-node computations.} $T$ is the number of local computations done by the active nodes $i \in \mathcal{A}_k$ in the $k$-th cycle. $T$ can potentially be different at every $k$, but for simplicity of notation, we assume a uniform $T$ across all the agents. Note that within a single cycle $T$ is the same for both the gossiping node pair. The stochastic gradients in all the computations in~\eqref{eq:mul_comp} are given as $\widehat{\nabla E}_i(\bm{w}_{i,k}^j) = -\frac{1}{n} \nabla \log p(\bm{w}_{i,k}^j) - \frac{1}{f_i} \nabla \log p(\hat{\bm{X}}_i| \bm{w}_{i,k}^j)$ ($0\leq \ell \leq T$) where $\hat{\bm{X}}_i$ is the mini-batch data set randomly chosen by $i$-th node to compute the gradient and $\hat{\bm{X}}_i$ remains the same for all the multiple local computations performed in the cycle. This is beneficial where a large amount of data can impose an overhead in loading data from the agent's storage to memory. Since the mini-batch needs to be loaded to memory at the start of the cycle and thereafter the same mini-batch is re-used, it avoids repetitive loading overhead within the cycle. However, a new mini-batch is loaded in a different cycle. Also, recall that, following the gossip protocol, agent $i$, $\forall i\notin \mathcal{A}_k$ does not make any updates meanwhile. 

The process in~\eqref{eq:fusion}--~\eqref{eq:final_update} represents a single \emph{cycle} at the $k$-th tick. The cycle begins with a linear inter-node fusion represented in~\eqref{eq:fusion}, followed by a series of multiple ($T\geq1$) local intra-node computations represented by~\eqref{eq:mul_comp}, and finally the value at the end of the last local computation is the updated sample for the $(k+1)$-th tick as shown by~\eqref{eq:final_update}. 
Within each cycle, the first fusion step needs to be synchronized between both the gossiping nodes, however, the rest of the local computations can be asynchronously performed at the node level. No additional communication between the nodes are needed at the end of the local computations when the cycle ends. The number of local computations ($T$) can either be pre-determined at the start of the training process and the same for all nodes across all cycles (\emph{static mode}), or they can be determined by the gossiping nodes on the fly (\emph{dynamic mode}) at the start of every cycle (such that it is the same for both gossiping nodes). In the dynamic mode, each gossiping node can determine a value and share it with the other along with its sample value, and then some pre-determined protocol can be used by each to finalize the value, for example, choosing $T$ as $\floor*{\frac{T_{i_k}+ T_{j_k}}{2}}$.

Another essential aspect of the local computations is the gradient step sizes used. To achieve asymptotic convergence, decreasing step sizes are used which are based on the number of (synchronous) iterations. However, since gossip is an asynchronous communication protocol, each node has a different iteration (i.e., ticks of the corresponding local clock). Also the tick of the universal clock is unknown to all the nodes. 
Thus, the step sizes are designed based on the ticks of the local clocks of the active nodes. While sharing the sample values to perform~\eqref{eq:fusion}, the gossiping nodes need to share their respective $\tau$ values as well. Thereafter, a common step size $\alpha_k$ is computed as below. 
\begin{align}
    \alpha_k &= \frac{a}{\Big(\min\{\tau_{i_k}, \tau_{j_k}\}+1 \Big)^{\delta_\alpha}}, \label{eq:alpha_def}
\end{align}
where $a>0$ 
and $\delta_\alpha>0$ are user-defined hyperparameters shared by all nodes. Note that $\alpha_k$ remains fixed for  all the computations in the cycle, but eventually decreases as the cycles progress. As we shall later see, this decreasing step size is needed to ensure asymptotic convergence.

{In the rest of the paper, we refer to the case where $T_i=1$ for all $i\in \mathcal{V}$ as the \emph{canonical} distributed ULA. Thus, in the canonical distributed ULA, every node performs exactly ``one'' update and simultaneously communicates its parameter values to others in a synchronous fashion. We shall use the canonical case as the baseline to compare the results of our experiments in Section~\ref{sec:Experiments}.}

\setlength{\textfloatsep}{0.5cm}
\setlength{\floatsep}{0.5cm}
\begin{algorithm}[t]
 \caption{Distributed Gossip ULA with multiple asynchronous local computations}
 \label{Algorithm1}
 \begin{algorithmic}[1]
  \STATE \textit{Input} : $a$, $\beta$, $\delta_\alpha$ and $T$ (only for static mode)
  \STATE \textit{Initialization} : $\mathbf{w}_0 = \begin{bmatrix} {\bm{w}}_{1,0}^\top & \ldots & {\bm{w}}_{n,0}^\top \end{bmatrix}^\top$
  \FOR {$k \geq 0$}
  \STATE \textit{Some node $i_k$ becomes active}
  \STATE \textit{$i_k$ randomly selects $j_k$; $\mathcal{A}_k = \{i_k,j_k\}$}
  \FOR {$i = 1$ to $n$}
  \IF {$i \in \mathcal{A}_k$}
  \STATE \textit{Determine} $T_i$ \textit{(if dynamic mode)}
  \STATE \textit{Broadcast $\bm{w}_{i,k-1}$, $\tau_i$ and $T_i$ (for dynamic mode)}
  \STATE \textit{Receive $\bm{w}_{j,k-1}$, $\tau_j$ and $T_j$ (for dynamic mode), where $j \in \mathcal{A}_k/\{i\}$}
  \STATE \textit{Compute} $\bm{w}_{i,k}^0 = \bm{w}_{i,k-1} - \beta \sum_{j\in \mathcal{A}_k} \left(\bm{w}_{i,k-1} - \bm{w}_{j,k-1}\right)$.
  \STATE \textit{Calculate  $T= \floor*{\frac{T_{i_k}+ T_{j_k}}{2}}$ (for dynamic mode)}
  \STATE \textit{Calculate } $\alpha_k = \frac{a}{(\min\{\tau_{i_k}, \tau_{j_k}\} +1)^{\delta_\alpha}}$
  \FOR {$j = 1$ to $T$}
  \STATE \textit{Sample } $\bm{v}_{i,k}^j \sim \mathcal{N}(\mathbf{0}_{d_w}, I_{d_w})$  
  \STATE \textit{Compute } $\nabla E_i(\bm{w}_{i,k}^j)$ \label{Step041}
  \STATE \textit{Update }  $\bm{w}_{i,k}^{j} = \bm{w}_{i,k}^{j-1}  - \frac{n\alpha_k}{p_i} \nabla E_i (\bm{w}_{i,k}^{j-1}) + \sqrt{\alpha_k n^2} \bm{v}_{i,k}^{j-1}$
  \ENDFOR
  \STATE \textit{Assign } $\bm{w}_{i,k+1} = \bm{w}_{i,k}^T$
  \ELSIF{$i \notin \mathcal{A}_k$} 
  \STATE \textit{No update } $\bm{w}_{i,k+1} = \bm{w}_{i,k}$
  \ENDIF
  \ENDFOR
  \ENDFOR
 \end{algorithmic}
 \end{algorithm}

\section{Theoretical results} \label{sec:Results}
In this section, we present the theoretical results and a brief outline of the corresponding analysis. We begin by stating our assumptions below.
\begin{assumption} \label{assump:Lipz}
All the individual gradients $\nabla E_i(\cdot) $ are Lipschitz continuous for all $i \in \mathcal{V}$, i.e., for any $\bm{w}_1, \bm{w}_2 \in \mathbb{R}^{d_w}$, we have
\begin{align} \label{eq:Lipz_1}
    \|\nabla E_i(\bm{w}_1) - \nabla E_i(\bm{w}_2)\| &\leq L_i \|\bm{w}_1 - \bm{w}_2\|.
\end{align}
\end{assumption}

{From~\eqref{eq:Lipz_1}, there exists $\bar{L}>0$ such that
\begin{align}
    \|\nabla E(\bm{w}_1) - \nabla E(\bm{w}_2)\| &\leq \bar{L} \|\bm{w}_1 - \bm{w}_2\|,
\end{align}
where $\nabla E(\cdot)$ is defined in~\eqref{eq:grad_terms_1}. Also, define $L = \displaystyle \max_{i\in \{1, \ldots, n\}} L_i$.}

\begin{assumption} \label{assump:Graph}
The interaction topology of \HB{$n$} networked nodes is given as a connected undirected
graph $\mathcal{G}(\mathcal{V}, \mathcal{E})$.
\end{assumption}

For a connected undirected graph $\mathcal{G}\left(\mathcal{V},\mathcal{E}\right)$, the graph Laplacian $\mathcal{L}$ is a positive semi-definite matrix with one eigenvalue at 0 corresponding to the eigenvector $\mathbf{1}_n$. Furthermore, it follows from Lemma~$3$ in~\cite{Gutman04} that for all $\mathbf{x}\in\mathbb{R}^n$, such that $\mathbf{1}^T_n\mathbf{x} = 0$, we have $\mathbf{x}^T\mathcal{L}\left( \mathcal{L} \right)^+\mathbf{x} =  \mathbf{x}^T\mathbf{x}$, {where $(\cdot)^+$ denoted the pseudo-inverse}. 

Let $\mathcal{F}_{k}$ denotes a filtration generated by the sequence $\{\mathbf{w}_{0},\ldots,\mathbf{w}_{k}\}$, i.e., $\mathbb{E}[\,\mathbf{v}{(k)}\,|\mathcal{F}_{k}] = \mathbf{0}$. 


\begin{assumption} \label{assump:sto_grad}
Let $\xi_{i,k}^j = \nabla E(\bar{\bm{w}}_k^j) - \sum_{i\in \mathcal{A}_k} \frac{1}{p_i} \nabla E_i(\bar{\bm{w}}_k^j)$ and $\zeta_{i,k}^j = \widehat{\nabla E}(\bm{w}_{i,k}^j) - \nabla E(\bm{w}_{i,k}^j)$, then for all $i \in \mathcal{V}$ the second moment of $\xi_{i,k}^j$ and $\zeta_{i,k}^j$ are bounded, i.e., 
\begin{align} \label{eq:assump_3_1}
    \mathbb{E}[\|\xi_{i,k}^j\|^2] &\leq C_\xi, \quad \forall \, 0 \leq k, 0\leq j < T, 
\end{align}
and 
\begin{align} \label{eq:assump_3_2}
    \mathbb{E}[\|\zeta_{i,k}^j\|^2] &\leq C_\zeta, \quad \forall \, 0 \leq k, 0\leq j < T.
\end{align}
\end{assumption}

The terms $\zeta_{i,k}^j$ and $\xi_{i,k}^j$ stem from the randomness in the gradient of the mean of the samples introduced by the mini-batch stochastic gradient and the gossip protocol respectively. This is expressed in detail in~\eqref{eq:grad_terms}. $\zeta_{i,k}^j$ and $\xi_{i,k}^j$ are related to the terms defined in~\eqref{eq:grad_terms_2} and~\eqref{eq:grad_terms_4}.

\begin{assumption} \label{assump:LSI}
The posterior distribution $p^*$ satisfies the log-Sobolev inequality (LSI) condition, i.e., for any function $g(\bar{\bm{w}})$ with $\mathbb{E}_{p^*}[ g(\bar{\bm{w}})] = 1$, there exists a constant $\rho_U>0$ such that the following condition is satisfied.
\begin{align} \label{eq:LSI_original}
    \mathbb{E}_{p^*} \big[g(\bar{\bm{w}}) \log g(\bar{\bm{w}}) \big] \leq \frac{1}{2\rho_U} \mathbb{E}_{p^*} \left[ \frac{\|\nabla g(\bar{ \bm{w}})\|^2}{g(\bar{\bm{w}})} \right].
\end{align}
\end{assumption}

All the aforementioned assumptions are commonly used in related literature. Assumption~\ref{assump:Lipz} implies smoothness of the log of the posterior function while the LSI Assumption~\ref{assump:LSI} is needed for the convergence of the distribution. The LSI assumption is weaker than log-concave assumption used in some existing literature~\cite{dalalyan2017theoretical, dalalyan2017further, cheng2018underdamped, cheng2018convergence, durmus2016sampling, durmus2017nonasymptotic, durmus2019high}. Assumption~\ref{assump:Graph} implies sufficient connectivity of the graph to ensure consensus and is trivial in the case of federated learning where a central server accesses all nodes' parameters for fusion. Assumption~\ref{assump:sto_grad} is common in stochastic gradient literature and holds true experimentally.  


\subsection{Consensus}
Define $\mathbf{w}_k = [\bm{w}_1^\top, \ldots, \bm{w}_n^\top]^\top \in \mathbb{R}^{nd_w \times nd_w}$ and $\bar{\bm{w}}_k = \frac{1}{n} \sum_{i=1}^{n} \bm{w}_k$. We quantify the consensus error $\tilde{\mathbf{w}}_k$ by the difference of each individual sample from their average, i.e.,
\begin{align}
    \tilde{\mathbf{w}}_k \triangleq \mathbf{w}_k - (\mathbf{1}_n \otimes \bar{\bm{w}}_k) = \left(I_{nd_w} - \frac{1}{n} \mathbf{1}_n \mathbf{1}_n \otimes I_{d_w} \right) \mathbf{w}_k
\end{align}

{The Laplacian matrix corresponding to the active gossiping agents $\mathcal{L}_k$ is given by $\mathcal{L}_k = (\mathbf{e}_{i_k} - \mathbf{e}_{j_k})(\mathbf{e}_{i_k} - \mathbf{e}_{j_k})^\top  \in \mathbb{R}^{n \times n}$. Let $\bar{\mathcal{L}} = \mathbb{E}[\mathcal{L}_k]$ and $\mathbf{w}^* = \left[ \bm{w}_1^*, \ldots, \bm{w}_n^* \right]^\top$ where $\bm{w}_i^*$ corresponds to some local minima of $E_i(\cdot)$ (i.e., $\nabla E_i(\bm{w}_i^*) = 0$).}

For asymptotic convergence to be achieved, we need $\mathbb{E} \|\tilde{\mathbf{w}}_k\|^2 \to 0$ as $k\to \infty$. Our analysis shows that the propagation of the consensus error from each subsequent cycle follows the following rate:
\begin{align}
    \mathbb{E}[\| \tilde{\mathbf{w}}_{k+1} \|^2] &\leq \bar{\lambda} \mathbb{E}[\| \tilde{\mathbf{w}}_{k}\|^2] + \frac{C_\mu}{(k+1)^{\delta_\alpha}}, \label{eq:con_err_syn_norm_sq_f}
\end{align}
where 
\begin{align}
    \bar{\lambda} &= \lambda(\theta+1)^2 \left[1 + \frac{5a^2n^2L^2T^2}{\theta p_m^2(1 - anLT/p_m)^2} \right] \qquad \qquad  
    \text{and} \\
\begin{split}
    C_\mu &= \frac{5a^2n^2T^2(\theta+1)^2}{\theta(1-anLT/p_m)^2} \left[2C_\zeta + L^2 C_{_{\mathbf{w}^*}} + 2L^2 C_{_{\bar{\bm{w}}}} \right] \\ 
    &\quad + \frac{4anTd_w(\theta+1)}{\theta} \left[1 + \frac{5a^2n^2L^2T^2(\theta+1)}{ p_m^2 (1-anLT/p_m)^2}\right]
\end{split}   
\end{align}
{in which $\lambda$ is the second largest eigenvalue of $\bar{\mathcal{L}}$, $\,\mathbb{E}[ \|\mathbf{w}^*\|^2] \leq C_{_{\mathbf{w}^*}}$, $p_m = \min_{i\in \mathcal{V}}\{p_i\} < \frac{1}{n}$} and $\theta>0$ is an arbitrarily chosen parameter needed for analysis. Before introducing our result on the rate of consensus, we need the following conditions to be satisfied. 
\begin{condition} \label{cond:1}
\begin{align}
    aT < \frac{p_m}{nL}. \label{eq:cond_1} 
\end{align}
\end{condition}
Condition~\ref{cond:1} arises from the proof of bounded gradient required for consensus.  It can always be satisfied by adjusting $a$ and $T$. Furthermore,~\eqref{eq:cond_1} shows that to safely perform more local computations, a smaller initial step size $a$ is advisable. 

\begin{condition} \label{cond:2}
\begin{align}
    \bar{\lambda} < 1. \label{eq:cond_2}
\end{align} 
\end{condition}
Condition~\ref{cond:2} is needed to ensure that the consensus error decreases with every cycle performed. It can be shown that there always exists some $a \in (0,1)$ that  ensures~\eqref{eq:cond_2} as long as $\lambda < \frac{1}{(\theta+1)^2}$. Since $\theta>0$ is arbitrarily chosen, 
for any $\lambda$ (which is a graph parameter and cannot be controlled), a large enough $\theta$ can be found to satisfy $\lambda < \frac{1}{(\theta+1)^2}$. Hence, a small enough $a\in (0,1)$ ensures~\eqref{eq:cond_2}.

\begin{condition} \label{cond:3}
\begin{align}
    {a < \left( \frac{\rho_U(3\delta_\alpha-1)}{16 T\bar{L}^4(6\delta_\alpha -1)} \right)^{\frac{1}{3}}.} \label{eq:cond_3} 
\end{align}
\end{condition}
Condition~\ref{cond:3} is a product of the bounded second moment of the average of all the samples generated by all the agents. Refer to the proof of Lemma~\ref{lemma:bounded_avg_samples} for details. For chosen values of $\delta_\alpha$ and $T$, there will always exist some $a\in (0,1)$ such that~\eqref{eq:cond_3} is satisfied. 

We now state the rate of consensus for the proposed algorithm.
\begin{theorem} \label{thm:consensus}
Suppose that Assumptions~\ref{assump:Lipz}--\ref{assump:sto_grad} hold and Conditions~\ref{cond:1}-\ref{cond:3} are satisfied, then the consensus error $\tilde{\mathbf{w}}(k+1)$ satisfies
\begin{align}
    \mathbb{E}[\|\tilde{\mathbf{w}}_{k+1}\|^2] &\leq W_1 \bar{\lambda}^{k+1} + \frac{W_2}{(k+1)^{\delta_\alpha}}, \label{eq:consensus_rate}
\end{align}
where 
\begin{align}
    W_1 &= \mathbb{E}[\|\tilde{\mathbf{w}}_0 \|^2] + C_\mu \sum_{t=0}^{\bar{t}-1} \frac{\bar{\lambda}^{-(t+1)}}{(t+1)^{\delta_\alpha}}, \\
    W_2 &= - \frac{C_\mu}{\bar{\lambda}} \left( \ln{\bar{\lambda}} + \frac{\delta_\alpha}{\bar{t}+1} \right)^{-1}, \qquad \qquad  
    \text{and} \\
    \bar{t} &= \max\Bigg\{0,\ceil[\Bigg]{\frac{\delta_\alpha}{|\ln{\bar{\lambda}}|} - 1} \Bigg\}.
\end{align}
\end{theorem}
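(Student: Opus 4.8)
The plan is to take the one-cycle contraction estimate~\eqref{eq:con_err_syn_norm_sq_f} as the point of departure and to solve the scalar recursion it induces; most of the real work—hidden in Conditions~\ref{cond:1} and~\ref{cond:3}—lies in establishing~\eqref{eq:con_err_syn_norm_sq_f} itself. That companion step would propagate $\|\tilde{\mathbf{w}}_k\|^2$ through the fusion map~\eqref{eq:fusion} and the $T$ inner ULA updates~\eqref{eq:mul_comp}: the spectral contraction of the expected gossip Laplacian $\bar{\mathcal{L}}$ supplies the factor $\lambda$, a telescoping bound on the accumulated stochastic gradient over a cycle—finite precisely because $anLT/p_m<1$ by Condition~\ref{cond:1}—supplies the $T$-dependent amplification, Assumption~\ref{assump:sto_grad} controls the noise terms $\xi_{i,k}^j,\zeta_{i,k}^j$, Lemma~\ref{lemma:bounded_avg_samples} (which uses Condition~\ref{cond:3}) bounds $\mathbb{E}\|\bar{\bm{w}}_k\|^2$, and a Young-type splitting with the free parameter $\theta>0$ assembles~\eqref{eq:con_err_syn_norm_sq_f} with the stated $\bar\lambda$ and $C_\mu$. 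From there, using $\bar\lambda<1$ (Condition~\ref{cond:2}), I would simply unroll the recursion to obtain
\[
  \mathbb{E}[\|\tilde{\mathbf{w}}_{k+1}\|^2]\ \le\ \bar\lambda^{\,k+1}\,\mathbb{E}[\|\tilde{\mathbf{w}}_0\|^2]\ +\ C_\mu\sum_{t=0}^{k}\frac{\bar\lambda^{\,k-t}}{(t+1)^{\delta_\alpha}}.
\]

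The next step is to split the convolution sum at the index $\bar t$. For the early block $0\le t\le\bar t-1$ I would write $\bar\lambda^{\,k-t}=\bar\lambda^{\,k+1}\bar\lambda^{-(t+1)}$, so that block contributes $C_\mu\bar\lambda^{\,k+1}\sum_{t=0}^{\bar t-1}\bar\lambda^{-(t+1)}(t+1)^{-\delta_\alpha}$; adding the leading term $\bar\lambda^{\,k+1}\mathbb{E}[\|\tilde{\mathbf{w}}_0\|^2]$ reproduces exactly $W_1\bar\lambda^{\,k+1}$. For the tail block $\bar t\le t\le k$ I reindex by $s=k-t$; since $k-s\ge\bar t$ we have $\frac{k+1}{k-s+1}=1+\frac{s}{k-s+1}\le 1+\frac{s}{\bar t+1}\le e^{s/(\bar t+1)}$, whence $(k-s+1)^{-\delta_\alpha}\le(k+1)^{-\delta_\alpha}e^{\delta_\alpha s/(\bar t+1)}$. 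Summing the resulting geometric series and extending it to $s=\infty$ bounds the tail by $C_\mu(k+1)^{-\delta_\alpha}/(1-q)$ with ratio $q=\bar\lambda\,e^{\delta_\alpha/(\bar t+1)}$; the choice $\bar t=\max\{0,\lceil\delta_\alpha/|\ln\bar\lambda|-1\rceil\}$ is exactly what makes $\ln q=\ln\bar\lambda+\delta_\alpha/(\bar t+1)\le 0$ (strictly negative in the nondegenerate case), so the series converges. To land on the advertised closed form I would then invoke the elementary fact that $1-q+q\ln q>0$ for $q\in(0,1)$ (it vanishes at $q=1$ and has derivative $\ln q<0$), which together with $\bar\lambda=q\,e^{-\delta_\alpha/(\bar t+1)}\le q$ gives $1-q>\bar\lambda|\ln q|$, hence $C_\mu/(1-q)<C_\mu/(\bar\lambda|\ln q|)=W_2$. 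Collecting the early and tail contributions yields~\eqref{eq:consensus_rate}.

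The hard part will be the tail estimate: one must trade the polynomial decay $(t+1)^{-\delta_\alpha}$ against the geometric factor $\bar\lambda^{\,k-t}$ without incurring an extra factor of $k$, which is the whole reason the sum is split and why the threshold $\bar t$ carries that logarithmic form; the short chain of inequalities ($\ln(1+x)\le x$, $(1+x)^{\delta_\alpha}\le e^{\delta_\alpha x}$, $1-q+q\ln q>0$) must be applied in exactly the right order to recover the precise $W_2$ rather than a looser constant. The other delicate point—more technical but more routine—is establishing~\eqref{eq:con_err_syn_norm_sq_f}: the cycle contains $T$ noisy inner steps, and bounding the sample drift over the cycle while keeping the error recursion closed requires the smallness condition $anLT/p_m<1$ from Condition~\ref{cond:1} and a careful accounting of the $T^2$ terms that ultimately appear inside $\bar\lambda$ and $C_\mu$.
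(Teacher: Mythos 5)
Your proposal is correct and follows the same overall architecture as the paper: establish the one-cycle recursion \eqref{eq:con_err_syn_norm_sq_f}, unroll it under $\bar\lambda<1$, and split the resulting convolution sum $\sum_{t=0}^{k}\bar\lambda^{k-t}(t+1)^{-\delta_\alpha}$ at the index $\bar t$ where the summand changes monotonicity. Your sketch of how \eqref{eq:con_err_syn_norm_sq_f} is obtained also names exactly the ingredients the paper uses: the self-referential bound on the accumulated per-cycle gradient that is solvable only because $anLT/p_m<1$ (Condition~\ref{cond:1}), Assumption~\ref{assump:sto_grad} for the gossip and mini-batch noise, Lemma~\ref{lemma:bounded_avg_samples} (hence Condition~\ref{cond:3}) for $\mathbb{E}\|\bar{\bm{w}}_k^j\|^2$, and the Young splitting in $\theta$.

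The one place where you genuinely diverge is the tail estimate for $t\ge\bar t$. The paper (via Lemma~\ref{lemma:consensus_rate_lemma}) compares the sum to an integral using the monotonicity of $f(t)=\bar\lambda^{k-t}(t+1)^{-\delta_\alpha}$ on $[\bar t,\infty)$, then dominates $f$ by a constant multiple of its own derivative and integrates, which produces $W_2$ directly. You instead reindex by $s=k-t$, absorb the polynomial factor into an exponential via $(1+\tfrac{s}{k-s+1})^{\delta_\alpha}\le e^{\delta_\alpha s/(\bar t+1)}$, sum the geometric series with ratio $q=\bar\lambda e^{\delta_\alpha/(\bar t+1)}$, and then convert $C_\mu/(1-q)$ into the paper's closed form using $1-q+q\ln q>0$ and $\bar\lambda\le q$. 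I checked this chain and it is sound, and it recovers the \emph{identical} constant $W_2=-\tfrac{C_\mu}{\bar\lambda}(\ln\bar\lambda+\tfrac{\delta_\alpha}{\bar t+1})^{-1}$; your discrete argument is arguably more self-contained than the integral-comparison route, at the cost of the extra elementary inequality at the end. One shared caveat: if $\delta_\alpha/|\ln\bar\lambda|$ is an integer then $\ln\bar\lambda+\delta_\alpha/(\bar t+1)=0$ and $W_2$ degenerates in both your argument (the geometric ratio becomes $q=1$) and the paper's statement itself; you flag this explicitly, the paper does not. The only thing your write-up leaves genuinely undone is the full derivation of \eqref{eq:con_err_syn_norm_sq_f}, which is where most of the paper's proof of this theorem actually lives; your sketch of it is accurate but would need to be expanded to a complete argument.
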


Equation~\eqref{eq:consensus_rate} in Theorem~\ref{thm:consensus} shows that the consensus error, under the constraints in~\eqref{eq:cond_1} and~\eqref{eq:cond_2}, asymptotically vanishes. The rate of consensus $\mathcal{O} \left(k^{-\delta_\alpha} \right)$ is the same as that in the case of canonical distributed ULA up to some constants. 

\subsection{Convergence}
In this section, we look at the convergence of the proposed algorithm in a probabilistic sense. To quantify the convergence, we use the Kullback-Leibler (KL) divergence. Let the distribution of the average of the samples generated by all agents at end of each cycle be denoted by $p(\bar{\bm{w}})$) and the target distribution $p^*$. The KL divergence of $p(\bar{\bm{w}})$ from $p^*$, denoted by $F\left( p( \bar{\bm{w}})\right)$, is defined as below.
\begin{align}
    F\left(p(\bar{\bm{w}})\right) &= \int p(\bar{\bm{w}}) \log \left( \frac{p(\bar{\bm{w}})}{p^*}\right) d\bar{\bm{w}}. \label{eq:KL_div_def}
\end{align}
The KL divergence is a non-negative function which is only zero when $p(\bar{\bm{w}}) \equiv p^*$. Thus, the asymptotic convergence of $F \left( p(\bar{\bm{w}})\right) \to 0$ implies $p(\bar{\bm{w}}) \to p^*$. {If $g({\bar{\bm{w}}}) = \displaystyle \frac{p({\bar{\bm{w}}})}{p^*}$ in~
\eqref{eq:LSI_original}, we obtain the following bound on $F\left( p( \bar{\bm{w}})\right)$ 
\begin{align}
\begin{split}
    F(p(\bar{\bm{w}})) 
    \, & \triangleq \, \mathbb{E}_{p(\bar{\bm{w}})}  \left[  \log\left(\frac{p(\bar{\bm{w}})}{p^*}\right) \right] \\
    & \leq \, \frac{1}{2 \rho_U} \mathbb{E}_{p(\bar{\bm{w}})}  \left[ \left\|\nabla \log\left(\frac{p(\bar{\bm{w}})}{p^*}\right)\right\|_2^2 \right]. \label{eq:LSI}
\end{split}
\end{align}}

The average dynamics of the samples generated by the algorithm. The average of the samples generated by the active nodes within a cycle ($j=1$ to $T$) follows the following dynamics.
\begin{align}
    \bar{\bm{w}}_k^{j+1} &= \bar{\bm{w}}_k^j - \alpha_k \sum_{i= \mathcal{A}_k} \frac{1}{p_i} \widehat{\nabla E}_i(\bm{w}_{i,k}^j) + \sqrt{2\alpha_k} \bar{\bm{v}}_k^j, \label{eq:avg_dyn}
\end{align}
where $\bar{\bm{v}}_k^j \sim \mathcal{N}( \mathbf{0}_{d_w},I_{d_w})$. Define $\widetilde{\nabla E}_k^j = \sum_{i \HB{\in} \mathcal{A}_k} \frac{1}{p_i} \widehat{\nabla E}_i(\bm{w}_{i,k}^j) $ and rewrite~\eqref{eq:avg_dyn} as 
\begin{align}
    \bar{\bm{w}}_k^{j+1} &= \bar{\bm{w}}_k^j - \alpha_k \widetilde{\nabla E}_k^j + \sqrt{2\alpha_k} \bar{\bm{v}}_k^j, \label{eq:avg_dyn_2}
\end{align}
which in turn can be considered the discretization of the standard continuous-time Langevin dynamics given by
\begin{align}
    d\bm{w}_k(t) &= -\widetilde{\nabla E}_k dt + \sqrt{2}d\bm{B}(t), \label{eq:cont_lang}
\end{align}
where $\bm{B}(t)$ represents a Brownian motion. Note that $t$ is the continuous-time corresponding to $j$ in~\eqref{eq:avg_dyn_2}.

Denote by $\mathcal{B}_k$ the mini-batch used for computing all the stochastic gradients in the $k$-th cycle. We split $\widetilde{\nabla E}_k^j$ into multiple components as below for the purpose of analysis:
\begin{align} \label{eq:grad_terms}
\begin{split}
    \widetilde{\nabla E}_k^j &= \nabla E(\bar{\bm{w}}_k^j) - \xi(\bar{ \bm{w}}_k^j, \mathcal{A}_k) + \Phi(\bar{ \bm{w}}_k^j, \tilde{\mathbf{w}}_k^j, \mathcal{A}_k) \\
    &\qquad + \zeta(\bar{\bm{w}}_k^j,  \tilde{\mathbf{w}}_k^j, \mathcal{A}_k, \mathcal{B}_k),
\end{split}
\end{align}
where
\begin{align}
    \nabla E(\bar{\bm{w}}_k^j) &= \sum_{i=1}^{n} \nabla E_i (\bar{\bm{w}}_{k}^j), \label{eq:grad_terms_1} \\
    \xi(\bar{ \bm{w}}_k^j, \mathcal{A}_k) &= \nabla E(\bar{\bm{w}}_k^j) - \sum_{i\in \mathcal{A}_k} \frac{1}{p_i} \nabla E_i(\bar{\bm{w}}_k^j), \label{eq:grad_terms_2} \\
    \Phi(\bar{\bm{w}}_k^j, \tilde{\mathbf{w}}_k^j, \mathcal{A}_k) &= \sum_{i\in \mathcal{A}_k} \frac{1}{p_i} \left(\nabla E_i(\bm{w}_{i,k}^j) - \nabla E_i(\bar{\bm{w}}_k^j) \right), \label{eq:grad_terms_3} \\
    \zeta(\bar{\bm{w}}_k^j,  \tilde{\mathbf{w}}_k^j, \mathcal{A}_k, \mathcal{B}_k) &= \sum_{i\in \mathcal{A}_k} \frac{1}{p_i} \left(\widehat{\nabla E}_i( \bm{w}_{i,k}^j) - \nabla E_i( \bm{w}_{i,k}^j) \right). \label{eq:grad_terms_4}
\end{align}

Note that $\mathbb{E}_{\mathbf{A}}[\xi(\bar{ \bm{w}}_k^j, \mathcal{A}_k)] = 0$ and $\mathbb{E}_{\mathbf{B}}[\zeta(\bar{\bm{w}}_k^j,  \tilde{\mathbf{w}}_k^j, \mathcal{A}_k, \mathcal{B}_k)] = 0$ where $\mathbf{A}$ and $\mathbf{B}$ are the sets of all possible elements of $\mathcal{A}_k$ and $\mathcal{B}_k$, respectively. Each component of $\widetilde{\nabla E}_k^j$ encompasses a specific aspect of the algorithm: $\xi$ denotes the stochasticity due to gossip, $\Phi$ represents the gradient error due to distributed local computations, and $\zeta$ captures the stochastic mini-batch gradient effect.  

We next derive the propagation of the KL divergence within the cycle using the Fokker-Planck (FP) equation corresponding to~\eqref{eq:cont_lang}. Thereafter, the propagation of the KL divergence in each subsequent cycle can be developed as follows.
\begin{align}
   F\left(p(\bar{\bm{w}}_{k+1})\right) &\leq \exp(-\rho_U\alpha_kT) F\left(p(\bar{\bm{w}}_{k})\right) + \omega_k, \label{eq:KL_div_evo_1cycle}
\end{align}
where $\omega_k = 4\alpha_k^3\bar{L}^2 T (\bar{L}^2 C_{_{\bar{\bm{w}}}} + C_\xi + C_\zeta) + 2\alpha_k^2\bar{L}^2 Td_w + \frac{n (n-1)T}{p_m^2} (4\alpha_k^3L^2 \bar{L}^2 + \alpha_k L^2) \mathbb{E}[ \|\tilde{\mathbf{w}}_k\|^2]$. From~\eqref{eq:KL_div_evo_1cycle}, the rate of convergence in terms of the KL divergence is obtained below.

\begin{theorem} \label{thm:Convergence}
Suppose that Assumptions~\ref{assump:Lipz}--\ref{assump:LSI} hold and Conditions~\ref{cond:1}-\ref{cond:3} are satisfied, then $F\left( p(\bar{\bm{w}}_{k+1})\right)$ satisfies
\begin{align}
    F\left( p(\bar{\bm{w}}_{k+1})\right) &\leq  \frac{C_{_{F_1}}}{\exp \left( \frac{a\rho_UT (k+1)^{1-\delta_\alpha} }{1-\delta_\alpha} \right)} + \frac{C_{_{F_2}}}{(k+1)^{\delta_\alpha}}, \label{eq:convergence_rate}
\end{align}
where 
\begin{align}
    &C_{_{F_1}} = \exp\left(\frac{a\rho_UT}{1-\delta_\alpha}\right) \Bigg[ F\left( p(\bar{\bm{w}}_0)\right) + \sum_{\ell=0}^{\bar{k}} \frac{C_\omega}{\ell^{2\delta_\alpha}} \exp \Bigg(a\rho_UT \times \nonumber \\
    &\, \left(1+\frac{\ell^{1-\delta_\alpha}}{1-\delta_\alpha} \right) \Bigg)\Bigg] + \bar{W}_1\exp \left( \frac{a\rho_UT(2-\delta_\alpha)}{1-\delta_\alpha} \right) \times \nonumber \\
    &\, \Bigg[ \sum_{\ell=0}^{\bar{\ell}} \exp\left(\frac{a\rho_UT}{1-\delta_\alpha} \ell^{1-\delta_\alpha} - |\ln \bar{\lambda}| \ell \right) - \Bigg( \frac{a\rho_UT}{1-\delta_\alpha} \ell^{-\delta_\alpha} \nonumber \\
    &\, - |\ln \bar{\lambda}| \Bigg)^{-1} \Bigg], \\ 
    C_{_{F_2}} &= \frac{2C_\omega \delta_\alpha}{a\delta_\alpha \rho_UT} \exp \left( \frac{a\rho_UT}{1-\delta_\alpha} 2^{1-\delta_\alpha}\right),
\end{align}
in which $C_\omega = 4a^3\bar{L}^2 T (\bar{L}^2 C_{_{\bar{\bm{w}}}} + C_\xi + C_\zeta) + 2a^2\bar{L}^ T 2d_w + \frac{n (n-1)T}{p_m^2} (4a^3L^2 \bar{L}^2 + a L^2) W_2$ and $\bar{W}_1 = \frac{n(n-1)T}{p_m^2}(4a^3L^2 \bar{L}^2 + a L^2) W_1$. 
\end{theorem}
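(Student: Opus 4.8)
The plan is to treat $F\big(p(\bar{\bm{w}}_k)\big)$ as a nonnegative scalar sequence, iterate the one-cycle bound~\eqref{eq:KL_div_evo_1cycle}, and then estimate the accumulated noise term $\omega_m$ carefully. First I would insert the step-size schedule~\eqref{eq:alpha_def} (used here in its effective form $\alpha_k=a/(k+1)^{\delta_\alpha}$) together with the consensus rate from Theorem~\ref{thm:consensus} into the definition of $\omega_k$. Using $\alpha_k\le a$, the inequality $(k+1)^{-3\delta_\alpha}\le(k+1)^{-2\delta_\alpha}$ (valid since $\delta_\alpha>0$), and the finiteness of $C_{_{\bar{\bm{w}}}}$ guaranteed by Lemma~\ref{lemma:bounded_avg_samples} under Condition~\ref{cond:3}, each summand of $\omega_k$ separates into a polynomially decaying part and an exponentially decaying part; after collecting the bounded multiplicative factors this gives
\[
\omega_k \;\le\; \frac{C_\omega}{(k+1)^{2\delta_\alpha}} \;+\; \bar W_1\,\bar\lambda^{\,k},
\]
with $C_\omega$ and $\bar W_1$ exactly as in the statement --- the $W_2/(k+1)^{\delta_\alpha}$ contribution of $\mathbb{E}[\|\tilde{\mathbf{w}}_k\|^2]$ produces the first term and the $W_1\bar\lambda^k$ contribution produces the second. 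The per-cycle contraction factor is $\exp(-\rho_U\alpha_kT)=\exp\!\big(-a\rho_UT(k+1)^{-\delta_\alpha}\big)$; here $\delta_\alpha\in(\tfrac13,1)$ is forced ($\delta_\alpha>\tfrac13$ by Condition~\ref{cond:3}, $\delta_\alpha<1$ implicitly by the form of~\eqref{eq:convergence_rate}).

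Next I would unroll~\eqref{eq:KL_div_evo_1cycle} from cycle $0$ to cycle $k$:
\[
F\big(p(\bar{\bm{w}}_{k+1})\big)\;\le\; e^{-S_{0,k}}\,F\big(p(\bar{\bm{w}}_0)\big)\;+\;\sum_{m=0}^{k} e^{-S_{m+1,k}}\,\omega_m,\qquad S_{r,k}\;\triangleq\; a\rho_UT\sum_{\ell=r}^{k}\frac{1}{(\ell+1)^{\delta_\alpha}}.
\]
Bounding the partial sums $\sum_\ell (\ell+1)^{-\delta_\alpha}$ from below and above by integrals converts $e^{-S_{0,k}}$ into $\exp(\tfrac{a\rho_UT}{1-\delta_\alpha})\exp\!\big(-\tfrac{a\rho_UT(k+1)^{1-\delta_\alpha}}{1-\delta_\alpha}\big)$ --- the exponentially small term of~\eqref{eq:convergence_rate}, with the $\exp(\tfrac{a\rho_UT}{1-\delta_\alpha})$ factor joining $C_{_{F_1}}$ --- and shows $e^{-S_{m+1,k}}\le \big(\text{fixed const}\big)\, e^{h(m)-h(k)}$ where $h(m)\triangleq \tfrac{a\rho_UT}{1-\delta_\alpha}(m+1)^{1-\delta_\alpha}$. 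Every remaining contribution then has the form $e^{-h(k)}\sum_m g(m)e^{h(m)}$, with $g(m)=C_\omega(m+1)^{-2\delta_\alpha}$ in one case and $g(m)=\bar W_1\bar\lambda^m$ in the other.

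For $g(m)=C_\omega(m+1)^{-2\delta_\alpha}$ I would use summation by parts: since $h$ is increasing with $h'(m)$ of the same order as $(m+1)^{-\delta_\alpha}$, one has $g(m)e^{h(m)}\le O\!\big((m+1)^{-\delta_\alpha}\big)\big(e^{h(m+1)}-e^{h(m)}\big)$ once $m$ exceeds a threshold, so the sum telescopes to a leading term of order $(k+1)^{-\delta_\alpha}e^{h(k)}$ plus a strictly lower-order remainder; multiplying by $e^{-h(k)}$ gives the steady-state term $C_{_{F_2}}/(k+1)^{\delta_\alpha}$, while the finitely many head terms (from $0$ up to a cutoff $\bar k$ of order $(a\rho_UT)^{1/\delta_\alpha}$) are absorbed into $C_{_{F_1}}$ --- this is the origin of the $\sum_{\ell=0}^{\bar k}\tfrac{C_\omega}{\ell^{2\delta_\alpha}}\exp\!\big(a\rho_UT(1+\tfrac{\ell^{1-\delta_\alpha}}{1-\delta_\alpha})\big)$ piece. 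For $g(m)=\bar W_1\bar\lambda^m=\bar W_1 e^{-|\ln\bar\lambda|m}$ the exponent $h(m)-|\ln\bar\lambda|m$ is eventually decreasing and concave (since $h$ grows sublinearly while $|\ln\bar\lambda|m$ grows linearly), so $\sum_{m\ge0} e^{h(m)-|\ln\bar\lambda|m}$ is bounded by a $k$-independent constant: I would split at $\bar\ell=\max\{0,\lceil(a\rho_UT/|\ln\bar\lambda|)^{1/\delta_\alpha}-1\rceil\}$, keep the head explicitly, and overestimate the decreasing tail by a linear-in-exponent (geometric/integral) bound $\le e^{h(\bar\ell)-|\ln\bar\lambda|\bar\ell}\big/\big(|\ln\bar\lambda|-\tfrac{a\rho_UT}{1-\delta_\alpha}\bar\ell^{-\delta_\alpha}\big)$ --- exactly the $-\big(\tfrac{a\rho_UT}{1-\delta_\alpha}\ell^{-\delta_\alpha}-|\ln\bar\lambda|\big)^{-1}$ correction in $C_{_{F_1}}$. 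Since this whole contribution, like $e^{-h(k)}F\big(p(\bar{\bm{w}}_0)\big)$, carries the factor $e^{-h(k)}$, it belongs to the $\exp(\cdot)^{-1}$ term of~\eqref{eq:convergence_rate}. Assembling the two pieces yields~\eqref{eq:convergence_rate}.

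The main obstacle is this last step. Because the damping rate $\rho_U\alpha_\ell T$ decays like $\ell^{-\delta_\alpha}$ rather than being bounded below by a positive constant, the naive geometric-series estimate of $\sum_m e^{-S_{m+1,k}}\omega_m$ fails, and one is forced into the summation-by-parts/monotonicity argument above with carefully chosen cutoffs $\bar k,\bar\ell$, taking care (i) to route the non-monotone initial stretches of $h(m)$ and of $h(m)-|\ln\bar\lambda|m$ into the transient constant $C_{_{F_1}}$ rather than into the $(k+1)^{-\delta_\alpha}$ term, and (ii) to track the fixed exponential prefactors such as $\exp\!\big(\tfrac{a\rho_UT}{1-\delta_\alpha}2^{1-\delta_\alpha}\big)$. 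The preliminary reduction $\omega_k\le C_\omega(k+1)^{-2\delta_\alpha}+\bar W_1\bar\lambda^k$ also quietly relies on Conditions~\ref{cond:1}--\ref{cond:2} (so that Theorem~\ref{thm:consensus} applies) and on Condition~\ref{cond:3} via Lemma~\ref{lemma:bounded_avg_samples} (so that $C_{_{\bar{\bm{w}}}}$, hence $C_\omega$, is finite).
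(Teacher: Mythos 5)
Your proposal is correct and follows essentially the same route as the paper: iterate the per-cycle bound~\eqref{eq:KL_div_evo_1cycle}, bound $\omega_k$ by $C_\omega(k+1)^{-2\delta_\alpha}+\bar W_1\bar\lambda^k$ via Theorem~\ref{thm:consensus} and Lemma~\ref{lemma:bounded_avg_samples}, convert the step-size partial sums to integrals (Lemma~\ref{lemma:summ_bounds}), and control the accumulated-noise sums by monotonicity arguments with the cutoffs $\bar k,\bar\ell$. The only difference is one of emphasis: the paper devotes its proof to re-deriving~\eqref{eq:KL_div_evo_1cycle} through the Fokker--Planck equation and defers the final unrolling to~\cite{parayil2020decentralized}, whereas you take~\eqref{eq:KL_div_evo_1cycle} as given and spell out that final summation step in more detail.
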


Equation~\eqref{eq:convergence_rate} in Theorem~\ref{thm:Convergence} proves that the KL divergence of the distribution of the average of the samples from the true posterior $p^*$ converges to zero asymptotically. We also observe that the convergence of the consensus error and the KL divergence occurs at the polynomial rate of $\mathcal{O}\left( k^{-\delta_\alpha} \right)$. Hence, a decreasing step size for the algorithm is essential for asymptotic convergence since the polynomial rate is dependent only on the annealing rate of the step size. The first term in~\eqref{eq:convergence_rate} has an exponential rate which is characteristic of the centralized ULA, however, the consensus error decay rate slows down the overall convergence of the algorithm to the same polynomial rate. 

\subsection{Discussion on Results}
{We note from the proof of Theorem~\ref{thm:Convergence} that asymptotic convergence would hold even if any other source of stochasticity is introduced in the gradient of the average dynamics as long as it is zero-mean and has a bounded {second moment} with a compromise of a higher uncertainty in the result for the same number of cycles. This was also speculated in~\cite{welling2011bayesian}.

A caveat to the proposed algorithm is that in any particular cycle both gossiping agents must synchronize the gradient step size 
and the number of computations. 
Our mathematical analysis quantifies the constraint on the number of local computations as well. We attempted to address the convergence by allowing different number of local computations and different gradient step sizes for the two gossiping agents in each cycle. However, we faced the following issues. The KL divergence of the average of the samples asymptotically diverges if the gradient step size is not synchronized. Though this can be achieved for conventional optimization problems~\cite{ram2009asynchronous}, since the step size controls the variance of the injected noise as well, this leads to instability in our analysis. Also, if the gossiping agents are allowed different number of computations in the same cycle, it appears to introduce a bias in the convergence error, but predominantly the major issue is to establish a bound for the expectation of the second moment of the average of the generated samples, which is needed for consensus.

Despite the aforementioned constraints, the gossip protocol allows for asynchronous implementation and the execution of multiple cycles can overlap. Thus, the agents do not need to know when a cycle starts and terminates, which provides a higher degree of asynchrony and expedites the process. {If an active agent chooses to gossip with a neighbor which happens to be already engaged in another ongoing cycle, it simply needs to choose a different neighbor. Note that this re-selection slightly changes the pre-determined expression of $p_i$ mentioned in Section~\ref{sec:Goss_protocol}. However, the probability of an already engaged agent being chosen for gossip is given as $\frac{1}{n(n-2)} \left(1+  \sum_{j\in \mathcal{N}_i} \frac{1}{|\mathcal{N}_j|}\right) \left(  \sum_{j\in \mathcal{N}_i/\{\mathcal{A}_k\}} \frac{1}{|\mathcal{N}_j|} \right)$. Thus, for a large communication graph with numerous agents and connections, this possibility is low.}

\section{Experiments} \label{sec:Experiments}
We present our simulation experiments and their results and associated conclusions in this section. Section~\ref{sec:Ex0} presents a simple academic problem to highlight the benefits of the proposed algorithm. In Section~\ref{sec:Ex1} we apply the algorithm to real world classification problems. The results in this section verifies the efficacy on classification problems with real data. 

\subsection{1D Gaussian toy Problem} \label{sec:Ex0}
We apply the algorithm to a simple 1D Gaussian toy problem presented in~\cite{teh2016consistency}. Since we can analytically compute the posterior distribution of this problem, we  perform a direct comparison of the posterior approximated by our algorithm with the true analytical posterior. Let 
\begin{align} 
    \theta &\sim \mathcal{N}(0,\sigma_{\theta}^2) \label{eq:G_toy1} \\
    x_i|\theta &\sim \mathcal{N}(\theta, \sigma_x^2), \qquad i=1,2,\ldots, N. \label{eq:G_toy2}
\end{align}
where we use $\sigma_{\theta} = 1$, $\HB{\sigma}_x = 5$ and $N=50$. 
The analytical expression for the posterior is given as 
\begin{align}
    \pi &= \mathcal{N} \left(\mu_p,\sigma_p^2 \right) = \mathcal{N} \left( \frac{\sum_{i=1}^Nx_i}{\frac{\sigma_x^2}{\sigma_\theta^2}+N}, \left(\frac{1} {\sigma_\theta^2} + \frac{N}{\sigma_x^2} \right)^{-1} \right). \label{eq:G_ana_pos}
\end{align}
The true value of $\theta=0.0851$, while $\mu_p=1.5754$ and $\sigma_p^2 = 0.5774$. Since the true posterior in~\eqref{eq:G_ana_pos} is a Gaussian distribution, we  compare our simulation results by the KL divergence of the distribution generated by the MCMC samples from the analytical posterior in~\eqref{eq:G_ana_pos}. Denote the mean and the variance of the simulation samples by $\mu_s$ and $\sigma_s^2$ respectively. Then the KL divergence is given by $\log \frac{\sigma_p}{\sigma_s} + \frac{\sigma_s^2 + (\mu_s-\mu_p)^2}{2\sigma_p^2} - \frac{1}{2}$. 

The entire data was distributed equally among the $5$ agents, each receiving  $10$ data points. The communication topology is a ring graph where each agent communicates with two neighbors. The hyperparameters used are: $a=10^{-4}$, $b = 0.5$, and $\delta_{\alpha}=0.01$. Each agent uses $f=0.1$ of its data (i.e., mini-batch of $1$ data point) for each local computation and perform $5000$ Monte Carlo chains for the simulations. The simulation is run for  $10^4$ cycles.

The results of this experiment averaged over all MC chains for all the agents are provided in Fig.~\ref{fig:GT1} for $T= \{1,3,5\}$. The decreasing trend of KL divergence observed in each case suggests successful learning of the posterior distribution by the algorithm. Furthermore, a clear trend of faster learning per cycle can be observed as $T$ increases from $1$ to $5$,  illustrating the benefits of performing multiple local multiple computations per cycle. 

\begin{figure}[htpb]
\begin{center}
    \includegraphics[width=1.0\linewidth]{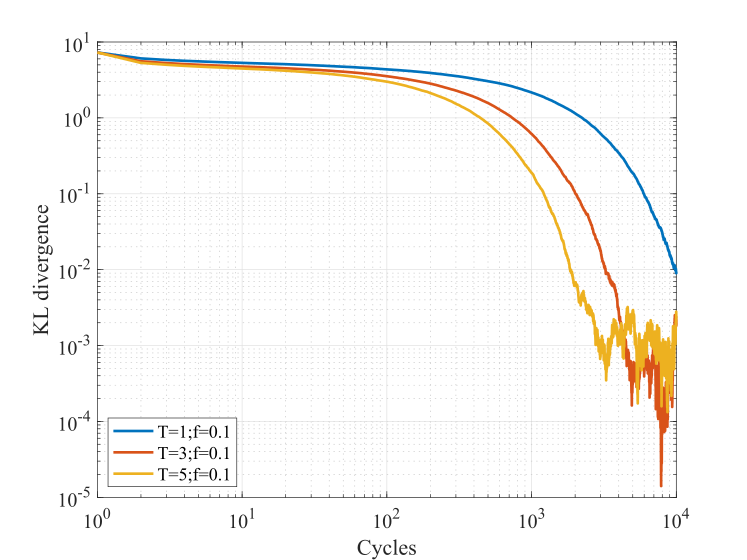}
    \caption{Comparison of the KL divergence using gossip ULA for different number of local computations per cycle; averaged over all agents and all chains.}
    \label{fig:GT1}
\end{center}  
\end{figure}

\subsection{Classification} \label{sec:Ex1_2}
In this section, we demonstrate the proposed algorithm on two real world data sets: (i) Gamma Telescope data set and (ii) mHealth data set from the UCI ML repository for performing classification problem. For both cases, data is distributed among $6$ agents and a Gaussian prior $\bm{w} \sim \mathcal{N}(\mathbf{0}_{d_w}, 20^2 I_{d_w})$ is used. For communication, we assume a ring graph topology for all the experiments. {In all the figures in this section, the horizontal axis denotes the number of \emph{cycles} while the vertical axis denotes the \emph{accuracy} in percentage.} 

\subsubsection{\textbf{Binary Classification}} \label{sec:Ex1}
We use the Gamma Telescope data set\footnote{https://archive.ics.uci.edu/ml/datasets/magic+gamma+telescope} in our numerical experiments. This represents the data from the imaging technique used on reading on the Cherenkov gamma telescope. {Each data point originally consists of $10$ attributes and an extra attribute `1' is augmented to each data point, thus, $d_w = 11$.} The problem involves detecting the readings that correspond to gamma signals {(labeled `1')} from among the background hadron signals {(labeled `0')}. Thus, it is a binary classification problem and we use a binary logistic regression model to perform the classification. {Define a mini-batch as $\hat{\bm{X}} = \{x^j\}_{j=1}^{m}$ where $x^j=\{x_I^j, x_L^j\}$, and $x_I^j \in \mathbb{R}^{11}$ and $x_L^j \in \{0,1\}$ are the inputs and corresponding labels of the data, respectively. Then, for any $\bm{w} \in \mathbb{R}^{11}$ the logistic function is given as $\ell(\bm{w}, x_I^j) = \frac{1}{1 + \exp(-\bm{w}^\top x_I^j)}$ and the likelihood as $p(\hat{\bm{X}}|\bm{w}) = - \sum_{j=1}^{m} x_L^j \ln \ell(\bm{w}, x_I^j) + (1-x_L^j) \ln (1- \ell(\bm{w}, x_I^j))$. Each element of $\bm{w}$ was initialized with a standard normal distribution.}

The data in this case is not distributed, so we randomly split the data into $6$ parts that correspond to each agent's local data set. The values of hyper-parameters used are $a=10^{-5}$, $b = 0.5$ and $\delta_{\alpha}=0.5$. The simulations are run for $10$ trials and the average accuracy is reported. Each trial consisted of $150$ cycles. 

\begin{figure}[htpb]
\begin{center}
    \includegraphics[width=1.0\linewidth]{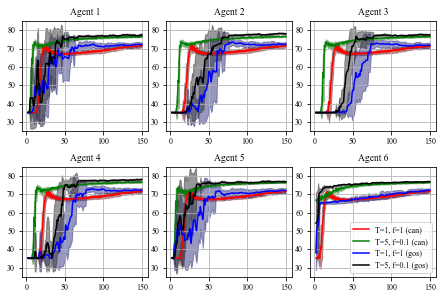}
    \caption{Comparison of the accuracy on the test data set averaged over $10$ trials for each agent. (Here, `can' and `gos' denote canonical and gossip respectively.)}
    \label{fig:LR_tel_comp}
\end{center}  
\end{figure}

Fig.~\ref{fig:LR_tel_comp} shows the results of the accuracy performance 
under different conditions. Specifically, we use the synchronized case with $T=1$ as the benchmark for comparison. We observe that the synchronized case with $T=5$ has faster convergence in the initial phase as expected from our analysis. In contrast, the gossip cases have slower convergence with higher variance in the initial phase since only $2$ agents make updates at every iteration. However, the final accuracy matches the corresponding synchronized cases. Between the two gossip cases, the $T=5$ case is faster than $T=1$ in the initial phase. Also, for the $T=5$ cases, we use stochastic gradient with mini-batches $1/10$-th the size of the entire data for each agent to expedite the multiple local computations. Fig.~\ref{fig:LR_tel_comp} clearly shows the efficacy of gossip, especially when performed with multiple local computations and stochastic gradient. 

\subsubsection{\textbf{mHealth Data set}} \label{sec:Ex2}
We demonstrate our algorithms using the mHealth data set\footnote{http://archive.ics.uci.edu/ml/datasets/mhealth+dataset}. The data consists of readings from $23$ different sensors strapped on human subjects and the task is to detect the activity being performed by the subject. There is a known list of $12$ recorded activities. {With an augmented attribute `1', each data point has $24$ attributes and number of classes is $12$ (labeled from `0' to `11'). We use cross-entropy likelihood function for this multi-class classification. Let $\hat{\bm{X}} = \{x_I^j, x_L^j\}_{j=1}^m$ where $x_I^j \in \mathbb{R}^{24}$ and $x_L^j \in \mathbb{R}^{12}$ are bit vectors for each label from `0' to `11', and $\bm{w} \in \mathbb{R}^{12*24}$ (i.e., $d_w = 288$) which can be reshaped into a matrix $\bm{W} \in \mathbb{R}^{12 \times 24}$. The softmax probabilities for each class corresponding to $x_I^j$ are computed as $\ell(\bm{W},x_I^j) = \exp(\bm{W}x_I^j)/\| \exp (\bm{W}x_I^j)\|_1 \in \mathbb{R}^{12}$, where $\exp(\cdot)$ denotes the element-wise exponential while $\|\cdot\|_1$ denotes the L1 norm. The cross-entropy likelihood function can then be expressed as $p(\hat{\bm{X}} |\bm{W}) = -\sum_{j=1}^m (x_L^j)^\top \ell(W,x_I^j)$. Each element of $\bm{w}$ was initialized with a Laplace distribution with mean equals $0$ and scale parameter equals $1$.}

The data set contains some unlabelled data. For our experiments, we remove all unlabelled data (from both training and testing). The data consists readings from $10$ different subjects and hence is inherently distributed. We take the data from the first $6$ subjects as the local training data for the $6$ agents, while we use the data from Subject $10$ as the test data. {The hyper-parameters used are $a=10^{-5}$, $b=0.8$ and $\delta_\alpha = 0.5$. A total of $10$ trials were performed in each case.}

We perform the simulations for a few different cases of local computations. First, we do uniform computations where the active agents perform a pre-determined ($T \in \{1,3,5\}$) number of computations, which remains the same at each cycle. Next, we perform variable numbers of local computations where each active agent performs  (same) randomly determined number of local computations between $1$ and $10$, which varies
at each cycle. The results of the accuracy plots on the test data in Fig.~\ref{fig:mHD_1}-\ref{fig:mHD_2}. 

Fig.~\ref{fig:mHD_1}-\ref{fig:mHD_3} compare the accuracy results on the test data for pre-determined and varying number of local computations where each agent receives only $100$ data points, and a mini-batch size of $10$ is utilized to compute the gradient. Fig.~\ref{fig:mHD_1} clearly demonstrates the faster training gain for a higher $T$ value. Although the difference between $T=3$ and $T=5$ is less pronounced, each case performs significantly better than the $T=1$ case, which indicates that even a few rounds of multiple computations can make a beneficial impact. Furthermore, Fig.~\ref{fig:mHD_2} shows that similar advantages are observed in the case where the number of local computations varies in each cycle (dynamic mode), which proves that a pre-determined value of $T$ is not mandatory. {For our simulations, $T$ was randomly chosen between $1$ and $10$.} Additionally, in Fig.~\ref{fig:mHD_3}, we present the results of the same experiment performed with a much larger training data set per agent, in this case $2000$ training data points per agent. The test data remains the same as earlier. The agents still use stochastic gradient with a mini-batch size $10$. {Overall similar trends are observed. Although the expedition in the rate of convergence by multiple local computations is less pronounced as the larger data set facilitates better learning right from the start, the faster convergence is still noticeable, especially for agent $3$. Additionally, the variance among the trials appears to be lower in the case of multiple computations.}  

\begin{figure}[htpb]
\begin{center}
    \includegraphics[width=\linewidth]{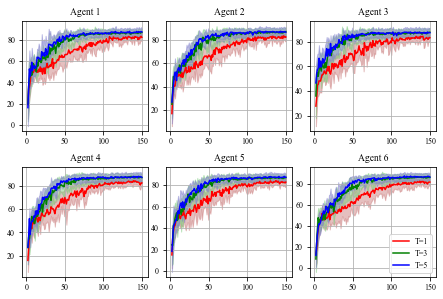}
     \caption{Comparison of the accuracy on the test data set for $6$ agents for $T=1$, $T=3$ and $T=5$ local computations via gossip ULA.}
    \label{fig:mHD_1}
\end{center}  
\end{figure}

\begin{figure}[htpb]
\begin{center}
    \includegraphics[width=\linewidth]{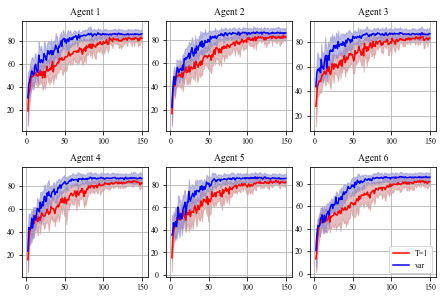}
     \caption{Comparison of the accuracy on the test data set for $6$ agents for $T=1$ and varying $T$ (denoted by `var') local computations via gossip ULA.}
    \label{fig:mHD_2}
\end{center}  
\end{figure}

\begin{figure}[htpb]
\begin{center}
    \includegraphics[width=\linewidth]{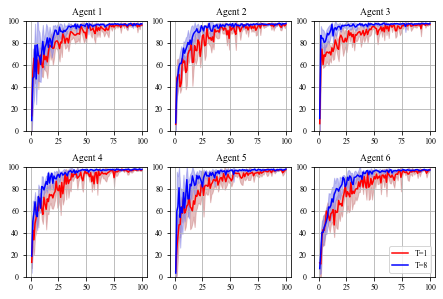}
     \caption{Comparison of the accuracy on the test data set for $6$ agents for $T=1$ and $T=8$ local computations via gossip ULA for a larger training data set.}
    \label{fig:mHD_3}
\end{center}  
\end{figure}

\subsection{Discussion on experiments}
The most important aspect of the experiments is the execution of a gossip-based asynchronous ULA which drastically reduces communication overhead. Since all the agents are only occasionally making updates, leveraging multiple local computations (which are often computationally cheaper and faster than communication in distributed setups) per cycle increases speed of convergence. Since these local computations are gradient based, we are able to expedite the process by incorporating a mini-batch stochastic gradient which is reused for all the local computations within a cycle. For our simulations, the usage of mini-batches of just $1/10$-th the size of the total data is sufficient to give useful results. 

We also observe that difference in the rate of convergence is more pronounced with smaller training data sets, making it an ideal framework for quick learning in limited data paradigm. However, we speculate that there can be more complex models with more learning parameters where multiple computations per cycle may be advantageous even with a reasonably large training data set. {Even in the case of large training data set, there may be instances where some agents still benefit from multiple computations, as evidenced in the case of agent $3$ in Fig.~\ref{fig:mHD_3}.} Thus, in scenarios where the communication overhead heavily dominates the computational overhead, performing multiple asynchronous computations is an effective strategy to speed up convergence.

\section{Conclusions} \label{sec:Conclusions}

{In this paper, we propose an asynchronous distributed Bayesian learning framework using ULA. 
The proposed algorithm reduces communication overhead at each cycle, and avoids time consumption due to idling during communication delays, which can result in reduction in training time. The algorithm also employs multiple `\emph{fast}' local computations in each cycle. {Our analysis and experiments show that at the initial phase of training, the proposed algorithm exhibits a convergence rate proportional to the exponential of the number of local computations, which is faster than the canonical distributed ULA.} This expedition can be practically significant, especially under limited data constraints. Furthermore, the local computations are made computationally cheaper by using mini-batch stochastic gradients and reusing the same mini-batch throughout the entire cycle. Our theoretical analysis shows that despite the additional stochasticities due to gossip and mini-batch gradients, both asymptotic consensus and convergence rates remain the same as the canonical distributed ULA up to certain constants. Our analysis approach can be easily extended to algorithms with other sources of zero-mean stochasticities that have bounded second moments.}

\bibliographystyle{IEEEtran}
\bibliography{Ref}

\section{Supplementary Material}
\label{sec:Appendix}
In this section, we present proof of our theorems and some supplementary materials used in the proof.

\subsection{Proof of Theorem~\ref{thm:consensus}}
The $j$-th local computation (where $0\leq j<T$) within the $k$-th cycle for all agents in a vectorized form as below:
\begin{align}
    &\mathbf{w}_{k}^{j+1} = \mathbf{w}_k^j - n\alpha_k S_k \hat{\mathbf{g}}_{k}^j + \sqrt{\alpha_kn^2} S'_k \mathbf{v}_{k}^j, \label{eq:con_syn} 
\end{align}
where $\hat{\mathbf{g}}_{k}^j = \Big[\widehat{\nabla E}_1(\bm{w}_1^j)^\top, \ldots, \widehat{\nabla E}_n(\bm{w}_n^j)^\top \Big]^\top \in \mathbb{R}^{nd_w}$, $\mathbf{v}_{k}^j = {[{\bm{v}_1^j}^\top, \ldots, {\bm{v}_n^j}^\top ]}^\top \in \mathbb{R}^{nd_w}$, $S_k = \left( \frac{1}{p_{i_k}} \mathbf{e}_{i_k}\mathbf{e}_{i_k}^\top + \frac{1}{p_{j_k}} \mathbf{e}_{j_k} \mathbf{e}_{j_k}^\top \right) \otimes \mathbf{1}_{d_w} \in \mathbb{R}^{nd_w \times nd_w}$ and $S'_k = \left( \mathbf{e}_{i_k} \mathbf{e}_{i_k}^\top + \mathbf{e}_{j_k} \mathbf{e}_{j_k}^\top \right) \otimes \mathbf{1}_{d_w} \in \mathbb{R}^{nd_w \times nd_w}$. Next, using~\eqref{eq:con_syn} iteratively for $0\leq j<T$ within the $k$-th cycle yields
\begin{align}
    \mathbf{w}_{k+1} &= \mathcal{W}_k \mathbf{w}_k - \alpha_kn S_k\mathbf{G}_k + \sqrt{\alpha_kn^2} S'_k\mathbf{V}_k, \label{eq:con_cycle}
\end{align}
where $\mathcal{W}_k = (I_n - \beta \mathcal{L}_k) \otimes I_{d_w} \in \mathbb{R}^{nd_w \times nd_w}$, $\mathbf{G}_k = \sum_{j=0}^{T-1} \hat{\mathbf{g}}_{k}^j \in \mathbb{R}^{nd_w}$ and $\mathbf{V}_k = \sum_{j=0}^{T-1} \mathbf{v}_{k}^j \in \mathbb{R}^{nd_w}$. 

Define $M = I_n - \frac{1}{n} \mathbf{1}_n \mathbf{1}_n^\top$. Pre-multiplying~\eqref{eq:con_cycle} with $(M \otimes I_{d_w})$ gives 
\begin{align} \label{eq:con_err_cycle} 
\begin{split}
    \tilde{\mathbf{w}}_{k} &= \mathcal{W}_k \tilde{\mathbf{w}}_k - (M \otimes I_{d_w}) \Big(\alpha_knS_k \mathbf{G}_k + \sqrt{\alpha_kn^2} S'_k\mathbf{V}_k \Big), 
\end{split}
\end{align}
where we use $(M \otimes I_{d_w})\mathcal{W}_k = \mathcal{W}_k (M \otimes I_{d_w})$. Next, taking norm of~\eqref{eq:con_err_cycle} yields
\begin{align}
    \|\tilde{\mathbf{w}}_{k+1}\| &\leq \|\mathcal{W}_k \tilde{\mathbf{w}}_k\| + \alpha_kn \|S_k \mathbf{G}_k\| + \sqrt{\alpha_kn^2} \|S'_k\mathbf{V}_k\|, \label{eq:con_err_syn_norm}
\end{align}
where we use the relation $\|(M \otimes I_{d_w})) \mathbf{x}\| \leq \|(M \otimes I_{d_w})\| \|\mathbf{x}\|$ (for any $\mathbf{x} \in \mathbb{R}^{nd_w}$) and the result $\|M \otimes I_{d_w}\|=1$. Thereafter, using the relation $(x+y)^2 \leq (\theta+1)x^2 + \left( \frac{ \theta+1}{\theta}\right)y^2$ (for any $\theta>0$) on~\eqref{eq:con_err_syn_norm} twice gives
\begin{align} \label{eq:con_err_syn_norm_sq}
    \|\tilde{\mathbf{w}}_{k+1}\|^2 &\leq (\theta+1)^2 \|\mathcal{W}_k \tilde{\mathbf{w}}_k\|^2 +  \frac{\alpha_k^2n^2(\theta+1)^2}{\theta} \|S_k\mathbf{G}_k\|^2 \nonumber \\
    &\qquad + \alpha_k n^2 \left(\frac{\theta+1}{\theta} \right) \|S'_k\mathbf{V}_k\|^2. 
\end{align}

The next objective is to obtain a bound for $\| S_k \mathbf{G}_k\|$. Define $\mathbf{g}_{k}^j = \Big[{\nabla E}_1(\bm{w}_{1,k}^j)^\top, \ldots, {\nabla E}_n(\bm{w}_{n,k}^j)^\top \Big]^\top \in \mathbb{R}^{nd_w}$, $\bar{\mathbf{g}}_{k}^j = \Big[{\nabla E}_1(\bar{\bm{w}}_k^j)^\top, \ldots, {\nabla E}_n(\bar{\bm{w}}_k^j)^\top \Big]^\top \in \mathbb{R}^{nd_w}$ and $\mathbf{g}^* = \Big[{\nabla E}_1(\bm{w}_1^*)^\top, \ldots, {\nabla E}_n(\bm{w}_n^*)^\top \Big]^\top = \mathbf{0}_{nd_w} \in \mathbb{R}^{nd_w}$. {Let $\|\mathbf{G}'_k\| = \sum_{j=0}^{T-1} \|S_k \hat{\mathbf{g}}_k^j\|$ which gives $\|S_k\mathbf{G}_k\| \leq \|\mathbf{G}'_k\|$.
Thus, we have 
\begin{align}
    \|\mathbf{G}'_k\| &\leq \sum_{j=0}^{T-1} \left\|S_k \left( (\hat{\mathbf{g}}_k^j - {\mathbf{g}}_k^j) + (\mathbf{g}_k^j - \bar{\mathbf{g}}_k^j) + (\bar{\mathbf{g}}_k^j - \mathbf{g}^*) \right) \right\|, \nonumber \\
    &\leq \sum_{j=0}^{T-1} \|S_k (\hat{\mathbf{g}}_k^j - {\mathbf{g}}_k^j) \| + \sum_{j=0}^{T-1} \|S_k({\mathbf{g}}_k^j - \bar{\mathbf{g}}_k^j) \| \nonumber \\
    &\qquad \qquad + \sum_{j=0}^{T-1} \|S_k(\mathbf{g}_k^j - \bar{\mathbf{g}}_k^j) \|, \nonumber \\
    &\leq \sum_{j=0}^{T-1} \|\boldsymbol{\zeta}_{\mathcal{A}_k}^j\| + \frac{L}{p_m} \Bigg(\sum_{j=0}^{T-1} \|\tilde{\mathbf{w}}_k^j\| + \sum_{j=0}^{T-1} \|\mathbf{1}_{\mathcal{A}_k} \otimes \bar{\bm{w}}_k^j\| \nonumber \\
    &\qquad \qquad + T \|\mathbf{w}^*\| \Bigg), \label{eq:G_bound_1}
\end{align}
where $\boldsymbol{\zeta}_{\mathcal{A}_k}^j \triangleq S_k (\hat{\mathbf{g}}_k^j - {\mathbf{g}}_k^j)$ and $\mathbf{1}_{\mathcal{A}_k} \in \mathbb{R}^n$ is a vector of all zeros except $1$s at positions $\mathcal{A}_k$. In~\eqref{eq:G_bound_1}, we use Assumption~\ref{assump:Lipz}. 
Now, pre-multiplying $(M\otimes I_{d_w})$ to~\eqref{eq:con_syn} and iteratively using it and thereafter taking norm, yield the following bound for $\|\tilde{\mathbf{w}}_k^j\|$:
\begin{align} \label{eq:bar_w}
\begin{split}
    \|\tilde{\mathbf{w}}_k^j\| &\leq \|\mathcal{W}_k \tilde{\mathbf{w}}_k\| + n \alpha_k \sum_{\ell=0}^{j-1} \|S_k \hat{\mathbf{g}}_k^\ell\| + \sqrt{\alpha_k n^2} \sum_{\ell=0}^{j-1} \|S'_k \mathbf{v}_k^\HB{\ell}\|.
\end{split}
\end{align}
Further, let $\|\mathbf{V}'_k\| = \sum_{j=0}^{T-1} \|S'_k {\mathbf{v}}_k^j\|$ which gives $\|S'_k \mathbf{V}_k\| \leq \|\mathbf{V}'_k\|$. Then, from~\eqref{eq:bar_w} we have
\begin{align} \label{eq:bar_w_sum}
\begin{split}
    \sum_{j=0}^{T-1} \| \tilde{\mathbf{w}}_k^j\| &\leq T\|\mathcal{W}_k \tilde{\mathbf{w}}_k\| + n \alpha_k T \|{\mathbf{G}}'_k\| + \sqrt{\alpha_k n^2}T \|\mathbf{V}'_k\|,
\end{split}
\end{align}
since for any $0\leq j <T$, we have $\sum_{\ell=0}^{j}\|S_k \hat{\mathbf{g}}_k^\ell\| \leq \sum_{\ell=0}^{T-1}\|S_k \hat{\mathbf{g}}_k^\ell\| = \|\mathbf{G}'_k\|$ and similarly $\sum_{\ell=0}^{j}\|S'_k {\mathbf{v}}_k^\ell\| \leq \|\mathbf{V}'_k\|$. Now, substituting~\eqref{eq:bar_w_sum} in~\eqref{eq:G_bound_1} yields the desired bound:
\begin{align} \label{eq:G_bound_3}
\begin{split}
    \|\mathbf{G}'_k\| &\leq \left(\frac{1}{1-n\alpha_k LT/p_m} \right) \Bigg(\sum_{j=0}^{T-1} \|\boldsymbol{\zeta}_{\mathcal{A}_k}^j\| \\
    &\qquad + \frac{LT}{p_m} \|\mathcal{W}_k \tilde{\mathbf{w}}_k\| + \frac{LT}{p_m} \sqrt{\alpha_kn^2} \|\mathbf{V}'_k\| \\
    &\qquad + \frac{L}{p_m} \sum_{j=0}^{T-1} \|\mathbf{1}_{\mathcal{A}_k} \otimes \bar{\bm{w}}_k^j\|  + \frac{LT}{p_m} \|\mathbf{w}^*\| \Bigg), 
\end{split}
\end{align}
which further gives
\begin{align} \label{eq:G_bound_4}
\begin{split}
    &\|\mathbf{G}'_k\|^2 \leq \left(\frac{1}{1-n\alpha_k LT/p_m} \right)^2 \Bigg[ 5\Bigg(\sum_{j=0}^{T-1} \|\boldsymbol{\zeta}_{\mathcal{A}_k}^j\| \Bigg)^2 \\
    &\qquad + \frac{5L^2T^2}{p_m^2} \|\mathcal{W}_k \tilde{\mathbf{w}}_k\|^2 + \frac{5 \alpha_k n^2L^2T^2}{p_m^2} \|\mathbf{V}'_k\|^2 \\
    &\qquad + \frac{5L^2}{p_m^2} \Bigg(  \sum_{j=0}^{T-1} \|\mathbf{1}_{\mathcal{A}_k} \otimes \bar{\bm{w}}_k^j\| \Bigg)^2  + \frac{5 L^2T^2}{p_m^2} \|\mathbf{w}^*\|^2 \Bigg].
\end{split}
\end{align}
Noting that $\|S_k \mathbf{G}_k\|^2 \leq \|\mathbf{G}'_k\|^2$ and $\|S'_k \mathbf{V}_k\|^2 \leq \|\mathbf{V}'_k\|^2$; and substituting~\eqref{eq:G_bound_4} in~\eqref{eq:con_err_syn_norm_sq} result in
\begin{align} \label{eq:cons_error_sq_1}
    &\|\tilde{\mathbf{w}}_{k+1}\|^2 \leq \Bigg[ 1 + \frac{5\alpha_k^2n^2L^2T^2}{\theta p_m^2(1-\alpha_knLT/p_m)^2}\Bigg] (\theta+1)^2 \|\mathcal{W}_k \tilde{\mathbf{w}}_k\|^2 \nonumber \\
    &\quad + \frac{5\alpha_k^2n^2T(\theta + 1)^2}{\theta(1-\alpha_knLT/p_m)^2} \Bigg[\sum_{j=0}^{T-1} \| \boldsymbol{\zeta}_{\mathcal{A}_k}^j \|^2 + \frac{L^2T}{p_m^2}\| \mathbf{w}^* \|^2 \nonumber \\
    &\quad + \frac{nL^2}{p_m^2} \sum_{j=0}^{T-1} \|\bar{\bm{w}}_k^j \|^2 \Bigg] + \left(\frac{\theta+1}{\theta}\right) \Bigg[ \frac{5\alpha_k^3 n^4L^2T^2 (\theta+1)}{p_m^2 (1-\alpha_k nlT/p_m)^2} \nonumber \\
    &\quad + \alpha_kn^2 \Bigg] \|S'_k \mathbf{V}'_k\|^2.
\end{align}}
Denote by $\mathcal{F}_k$ the history of the sequence of samples and active agents $\{\mathbf{w}_0, \ldots, \mathbf{w}_k, \mathcal{A}_0, \ldots, \mathcal{A}_k\}$. The expectation $\mathbb{E}[\cdot|\mathcal{F}_k]$ of the individual terms in~\eqref{eq:cons_error_sq_1} are analyzed separately. The following results can be shown:
\begin{align} \label{eq:cons_error_sq_1.1}
    \mathbb{E}[\|\mathcal{W}_k \tilde{ \mathbf{w}}_k\|^2|\mathcal{F}_k] &= \tilde{\mathbf{w}}_k^\top \mathbb{E}[\|\mathcal{W}_k^\top \mathcal{W}_k\|^2 |\mathcal{F}_k] \tilde{\mathbf{w}}_k \leq  \lambda \| \tilde{\mathbf{w}}_k\|^2.
\end{align}
Also, using Assumption~\ref{assump:sto_grad}, the following inequality holds.
\begin{align} \label{eq:cons_error_sq_1.2}
    \mathbb{E} \left[ \sum_{j=0}^{T-1} \left\|\boldsymbol{\zeta}_{_{ \mathcal{A}_k}}^j\right\|^2 \Bigg| \mathcal{F}_k \right] &= \mathbb{E} \left[\sum_{j=0}^{T-1} \sum_{i\in \mathcal{A}_k} \left\| \boldsymbol{\zeta}_{i, k}^j \right\|^2 \Bigg|\mathcal{F}_k \right] \leq 2T C_\zeta.
\end{align}
Note that the remaining terms $\|\mathbf{w}^*\|^2$, $\sum_{j=0}^{T-1} \|\bar{\bm{w}}_k^j\|^2$ and $\|\mathbf{V}'_k\|^2$ are independent of $\mathcal{F}_k$. Since $\|\mathbf{w}^*\|^2$ is a constant, there exists some $C_{_{\mathbf{w}^*}}>0$ such that $\mathbb{E} \left[\left\|\mathbf{w}^*\right\|^2\right] \leq C_{_{\mathbf{w}^*}}$. It also follows that $\mathbb{E} \left[\|\mathbf{V}'_k\|^2\right] \leq 2T^2d_w$. Finally, we have $\mathbb{E}[\|\bar{\bm{w}}_k^j\|^2] \leq C_{_{\bar{\bm{w}}}}$ for any $0<k,\, 0\leq j<T$ from Lemma~\ref{lemma:bounded_avg_samples}. Thus, $\mathbb{E} \left[\sum_{j=0}^{T-1} \|\bar{\bm{w}}_k^j\|^2 \right] \leq TC_{_{\bar{\bm{w}}}}$. Substituting the above results in~\eqref{eq:cons_error_sq_1} and taking the total expectation results in~\eqref{eq:con_err_syn_norm_sq_f}. Thereby, applying Lemma~\ref{lemma:consensus_rate_lemma} to~\eqref{eq:con_err_syn_norm_sq_f} yields the result~\eqref{eq:consensus_rate} in Theorem~\ref{thm:consensus}.

\subsection{Proof of Theorem~\ref{thm:Convergence}}
We start by establishing the Fokker-Planck (FP) equation corresponding to~\eqref{eq:cont_lang} and then deriving the evolution of the KL divergence of the samples created by~\eqref{eq:avg_dyn_2}. {To that end, we use a similar approach to~\cite[Section S4, (S97)-(S104)]{parayil2020decentralized}. Let $t$ correspond to the continuous-time within the $k$-th cycle where $t=t_k^j$ ($0\leq j<T$) coincides with the $j$-th computation. In the following expressions, $\bar{\bm{w}}_k^t$ represents the continuous variable corresponding to $\bar{\bm{w}}$ within $t \in [t_k^j, t_k^{j+1})$ of $k$-th cycle, while $\bar{\bm{w}}_k^j$ is the fixed value of $\bar{\bm{w}}$ at $t=t_j$ of $k$-th cycle. Then the FP equation for~\eqref{eq:cont_lang} is given by} 
\begin{align}
    &\frac{\partial p(\bar{\bm{w}}_k^t)}{ \partial t} = \nabla_{\bar{\bm{w}}_k^t} \cdot \Bigg[p(\bar{\bm{w}}_k^t) \nabla \,\log\left(\frac{p(\bar{\bm{w}}_k^t)}{p^*} \right)\Bigg] \nonumber \\
    &\qquad - \nabla_{\bar{\bm{w}}_k^t} \cdot \Bigg[ \int \sum_{\mathbf{A}} \sum_{\mathbf{B}} p(\bar{\bm{w}}_k^t, \bar{\bm{w}}_k^j, \tilde{\mathbf{w}}_k^j|\mathcal{A}_k, \mathcal{B}_k) p(\mathcal{A}_k) \times \nonumber \\
    &\qquad p(\mathcal{B}_k) \Big( \nabla E(\bar{\bm{w}}_k^t) \nonumber - \nabla E(\bar{\bm{w}}_k^j) + \xi( \bar{\bm{w}}_k^j, \mathcal{A}_k) \nonumber \\
    &\qquad  - \Phi(\bar{\bm{w}}_k^j, \tilde{\mathbf{w}}_k^j, \mathcal{A}_k ) \nonumber - \zeta(\bar{\bm{w}}_k^j, \tilde{\mathbf{w}}_k^j, \mathcal{A}_k, \mathcal{B}_k) \Big) d\bar{\bm{w}}_k^j d\mathbf{w}_k^j \Bigg], \nonumber \\
    &\,\, = \nabla_{\bar{\bm{w}}_k^t} \cdot \Bigg[p(\bar{\bm{w}}_k^t) \nabla \log\left(\frac{p(\bar{\bm{w}}_k^t)}{p^*} \right)\Bigg] + \nabla_{\bar{\bm{w}}_k^t} \cdot \tilde{f}(\bar{\bm{w}}_k^t), \label{eq:FP}
\end{align}
where $\tilde{f}(\bar{\bm{w}}_k^t) = \int \sum_{\mathbf{A}} \sum_{\mathbf{B}} p(\bar{\bm{w}}_k^t, \bar{\bm{w}}_k^j, \tilde{\mathbf{w}}_k^j| \mathcal{A}_k) p(\mathcal{A}_k) \Big( \nabla E(\bar{\bm{w}}_k^t) - \nabla E(\bar{\bm{w}}_k^j) - \Phi(\bar{\bm{w}}_k^j, \tilde{\mathbf{w}}_k^j, \mathcal{A}_k ) \Big) d\bar{\bm{w}}_k^j d \tilde{\mathbf{w}}_k^j$. Making use of the fact that $\mathbb{E}_{p(\mathcal{A}_k)} [\xi( \bar{\bm{w}}_k^j, \mathcal{A}_k)] = 0$ and $\mathbb{E}_{p(\mathcal{B}_k)} [ \zeta(\bar{\bm{w}}_k^j, \tilde{\mathbf{w}}_k^j, \mathcal{A}_k, \mathcal{B}_k)] = 0$, the terms $\xi( \bar{\bm{w}}_k^j, \mathcal{A}_k)$ and $\zeta(\bar{\bm{w}}_k^j, \tilde{\mathbf{w}}_k^j, \mathcal{A}_k, \mathcal{B}_k)$ terms vanish from $\tilde{f}(\bar{\bm{w}}_k^t)$ when the summations over $\mathbf{A}$ and $\mathbf{B}$ are performed respectively. Next, from definition of KL divergence in~\eqref{eq:KL_div_def}, we have 
\begin{align} \label{eq:KL_div_der}
    \dot{F}\left( p(\bar{\bm{w}}_k^t) \right) = \int\left( \log\left(\frac{p(\bar{\bm{w}}_k^t)}{p^*} \right) + 1\right) \frac{\partial p(\bar{\bm{w}}_k^t)}{\partial t} d\bar{\bm{w}}_k^t.
\end{align}
Substituting~\eqref{eq:FP} in~\eqref{eq:KL_div_der} results in 
\begin{align} \label{eq:Conv_1}
\begin{split}
    &\dot{F}\left( p(\bar{\bm{w}}_k^t) \right) = - \mathbb{E}_{p} \Bigg\|\nabla \log\left(\frac{p(\bar{\bm{w}}_k^t)}{p^*} \right)\Bigg\|^2 \\
    &\qquad + \underbrace{\int \nabla \log \left(\frac{p(\bar{\bm{w}}_k^t)}{p^*} \right)^\top \tilde{f}(\bar{\bm{w}}_k^t) d\bar{\bm{w}}_k^t}. 
\end{split} \\
    &\qquad \qquad \qquad \qquad \quad \, \Theta \nonumber
\end{align}

We  analyze $\Theta$ in~\eqref{eq:Conv_1}. Substituting $\tilde{f}(\bar{\bm{w}}_k^t)$ in~\eqref{eq:Conv_1} yields
\begin{align} \label{eq:Conv_2}
    \Theta &= \iiint \sum_{\mathbf{A}} \sum_{\mathbf{B}} \nabla \log \left(\frac{p(\bar{\bm{w}}_k^t)}{p^*} \right)^\top \Big( \nabla E(\bar{\bm{w}}_k^t) - \nabla E(\bar{\bm{w}}_k^j) \nonumber \\
    &\quad - \Phi(\bar{\bm{w}}_k^j, \tilde{\mathbf{w}}_k^j\mathcal{A}_k) \Big) p(\bar{\bm{w}}_k^t, \bar{\bm{w}}_k^j, \tilde{\mathbf{w}}_k^j|\mathcal{A}_k, \mathcal{B}_k) p(\mathcal{A}_k) p(\mathcal{B}_k) \nonumber \\
    &\qquad \qquad  d\bar{\bm{w}}_k^t d\bar{\bm{w}}_k^j d\tilde{\mathbf{w}}_k^j, \nonumber \\
    & \leq \frac{1}{2} \mathbb{E}_{p} \Bigg\|\nabla \log\left(\frac{p(\bar{\bm{w}}_k^t)}{p^*} \right)\Bigg\|^2 + \bar{L}^2 \iiint \sum_{\mathbf{A}} \sum_{\mathbf{B}}\|\bar{\bm{w}}_k^t- \bar{\bm{w}}_k^j\|^2 \times \nonumber \\
    &\qquad \qquad p(\bar{\bm{w}}_k^t, \bar{\bm{w}}_k^j, \tilde{\mathbf{w}}_k^j|\mathcal{A}_k, \mathcal{B}_k) p(\mathcal{A}_k) p(\mathcal{B}_k) d\bar{\bm{w}}_k^t d\bar{\bm{w}}_k^j d\tilde{\mathbf{w}}_k^j \nonumber \\
    &\quad + \iiint \sum_{\mathbf{A}} \|\Phi(\bar{\bm{w}}_k^j, \tilde{\mathbf{w}}_k^j, \mathcal{A}_k) \|^2 p(\bar{\bm{w}}_k^t, \bar{\bm{w}}_k^j, \tilde{\mathbf{w}}_k^j|\mathcal{A}_k) p(\mathcal{A}_k) \nonumber \\
    &\qquad \qquad d\bar{\bm{w}}_k^t d\bar{\bm{w}}_k^j d\tilde{\mathbf{w}}_k^j.
\end{align}
{Next, we analyze the individual norms in~\eqref{eq:Conv_2}. The last term in~\eqref{eq:Conv_2} satisfies 
\begin{align}
    &\iiint \sum_{\mathbf{A}} \|\Phi(\bar{\bm{w}}_k^j, \tilde{\mathbf{w}}_k^j\mathcal{A}_k) \|^2 p(\bar{\bm{w}}_k^t, \bar{\bm{w}}_k^j, \tilde{\mathbf{w}}_k^j|\mathcal{A}_k) p(\mathcal{A}_k) \nonumber \\
    &d\bar{\bm{w}}_k^t d\bar{\bm{w}}_k^j d\tilde{\mathbf{w}}_k^j \nonumber \\
    &\quad \leq \iint \sum_{\mathbf{A}} \left\| \sum_{i\in \mathcal{A}_k} \frac{1}{p_i} (\nabla E_i(\bm{w}_{i,k}^j) - \nabla E_i(\bar{\bm{w}}_k^j) ) \right\|^2 p(\mathcal{A}_k) \nonumber \\
    &\qquad p(\bar{\bm{w}}_k^j, \tilde{\mathbf{w}}_k^j) \left(\int p(\bar{\bm{w}}_k^t|\mathcal{A}_k) d\bar{\bm{w}}_k^t \right) d\bar{\bm{w}}_k^j d\tilde{\mathbf{w}}_k^j, \\
    &\quad \leq \frac{2}{p_m^2} \iint \left(\sum_{\mathbf{A}} \sum_{i\in \mathcal{A}_k} \left\| \nabla E_i(\bm{w}_{i,k}^j) - \nabla E_i(\bar{\bm{w}}_k^j) \right\|^2 p(\mathcal{A}_k) \right) \nonumber \\
    &\qquad \qquad p(\bar{\bm{w}}_k^j, \tilde{\mathbf{w}}_k^j) d\bar{\bm{w}}_k^j d\tilde{\mathbf{w}}_k^j, \\
    &\quad \leq \frac{2}{p_m^2} \frac{n(n-1)}{2} L^2 \iint \sum_{i=1}^n \| \bm{w}_{i,k}^j - \bar{\bm{w}}_k^j \|^2 p(\bar{\bm{w}}_k^j, \tilde{\mathbf{w}}_k^j) d\bar{\bm{w}}_k^j d\tilde{\mathbf{w}}_k^j, \nonumber \\
    &\quad \leq \frac{n(n-1)L^2}{p_m^2} \int \|\tilde{\mathbf{w}}_k^j\|^2 p(\tilde{\mathbf{w}}_k^j) d\tilde{\mathbf{w}}_k^j, \\
    &\quad \leq \frac{n(n-1)L^2}{p_m^2} \mathbb{E}[\|\tilde{\mathbf{w}}_k\|^2]. \label{eq:Conv_3}
\end{align}
Next, we look into $\|\bar{w}_k^t - \bar{w}_k^j\|$. Integrating~\eqref{eq:cont_lang} from $t=t_k^j$ to some $t< t_k^{j+1}$ yields 
\begin{align}
    \bar{\bm{w}}_k^t - \bar{\bm{w}}_k^j = -\widetilde{\nabla E}_k (t - t_k^j) + \sqrt{2} \Big(\bm{B}(t) - \bm{B}(t_k^j) \Big). \label{eq:Conv_3.0}
\end{align}
Substituting~\eqref{eq:grad_terms} in~\eqref{eq:Conv_3.0} and thereafter taking the norm yield
\begin{align}
    &\|\bar{\bm{w}}_k^t - \bar{\bm{w}}_k^j\|^2 \leq \Big\| (t-t_k^j) \Big(-\nabla E(\bar{\bm{w}}_k^j) + \xi(\bar{\bm{w}}_k^j, \tilde{\mathbf{w}}_k^j) - \Phi(\bar{\bm{w}}_k^j, \nonumber \\
    & \tilde{\mathbf{w}}_k^j, \mathcal{A}_k) - \zeta(\bar{\bm{w}}_k^j, \tilde{\mathbf{w}}_k^j, \mathcal{A}_k, \mathcal{B}_k) \Big) + \sqrt{2} \big(\bm{B}(t) - \bm{B}(t_k^j) \big) \Big\|^2 \nonumber \\
    &\leq (t-t_k^j)^2 \Big\|-\nabla E(\bar{\bm{w}}_k^j) + \xi(\bar{\bm{w}}_k^j, \tilde{\mathbf{w}}_k^j) - \Phi(\bar{\bm{w}}_k^j,\tilde{\mathbf{w}}_k^j, \mathcal{A}_k)\nonumber \\
    &\quad - \zeta(\bar{\bm{w}}_k^j, \tilde{\mathbf{w}}_k^j, \mathcal{A}_k, \mathcal{B}_k) \Big\|^2 + 2\Big\|\bm{B}(t) - \bm{B}(t_k^j) \Big\|^2 \nonumber \\
    &\quad + 2\sqrt{2} (t-t_k^j) \Big(\bm{B}(t) - \bm{B}(t_k^j) \Big)^\top \Big(-\nabla E(\bar{\bm{w}}_k^j) + \xi(\bar{\bm{w}}_k^j, \nonumber \\
    &\qquad \quad \tilde{\mathbf{w}}_k^j) -\Phi(\bar{\bm{w}}_k^j, \tilde{\mathbf{w}}_k^j, \mathcal{A}_k) - \zeta(\bar{\bm{w}}_k^j, \tilde{\mathbf{w}}_k^j, \mathcal{A}_k, \mathcal{B}_k) \Big), \\
    &\leq 4\alpha_k^2 \Big( \|\nabla E(\bar{\bm{w}}_k^j)\|^2 + \|\xi(\bar{\bm{w}}_k^j, \tilde{\mathbf{w}}_k^j)\|^2 + \| \Phi(\bar{\bm{w}}_k^j,\tilde{\mathbf{w}}_k^j, \mathcal{A}_k) \|^2 \nonumber \\
    &\quad + \|\zeta(\bar{\bm{w}}_k^j, \tilde{\mathbf{w}}_k^j, \mathcal{A}_k, \mathcal{B}_k)\|^2 \Big) + 2\Big\|\bm{B}(t) - \bm{B}(t_k^j) \Big\|^2 \nonumber \\
    &\quad + 2\sqrt{2} (t-t_k^j) \Big(\bm{B}(t) - \bm{B}(t_k^j) \Big)^\top \Big(-\nabla E(\bar{\bm{w}}_k^j) + \xi(\bar{\bm{w}}_k^j, \nonumber \\
    &\qquad \quad \tilde{\mathbf{w}}_k^j) -\Phi(\bar{\bm{w}}_k^j, \tilde{\mathbf{w}}_k^j, \mathcal{A}_k) - \zeta(\bar{\bm{w}}_k^j, \tilde{\mathbf{w}}_k^j, \mathcal{A}_k, \mathcal{B}_k) \Big), \label{eq:Conv_3.1}
\end{align}
where we used $t-t_k^j \leq \alpha_k$. {Without loss of generality, assuming $\nabla E(\mathbf{0}) = \mathbf{0}$, we have $\|\nabla E(\bar{\bm{w}}_k^j) \|^2 \leq \bar{L}^2 \| \bar{\bm{w}}_k^j\|^2$.} Then, 
\begin{align}
    &\iiint \sum_{\mathbf{A}} \| \nabla E(\bar{\bm{w}}_k^j) \|^2 p(\bar{\bm{w}}_k^t, \bar{\bm{w}}_k^j, \tilde{\mathbf{w}}_k^j,\mathcal{A}_k) d\bar{\bm{w}}_k^t d\bar{\bm{w}}_k^j d\tilde{\mathbf{w}}_k^j \nonumber \\ 
    & \qquad \qquad \qquad \leq \bar{L}^2 C_{_{\bar{\bm{w}}}}. \label{eq:Conv_3.2}
\end{align}
Furthermore, using~\eqref{eq:assump_3_1} and~\eqref{eq:assump_3_2} of Assumption~\ref{assump:sto_grad}, the following bounds can be respectively derived.
\begin{align}
    &\iiint \sum_{\mathbf{A}} \|\xi(\bar{\bm{w}}_k^j, \tilde{\mathbf{w}}_k^j)\|^2 p(\bar{\bm{w}}_k^t, \bar{\bm{w}}_k^j, \tilde{\mathbf{w}}_k^j,\mathcal{A}_k)  \nonumber \\ 
    & \qquad \qquad \qquad d\bar{\bm{w}}_k^t d\bar{\bm{w}}_k^j d\tilde{\mathbf{w}}_k^j \leq C_\xi, \label{eq:Conv_3.3}
\end{align}
and 
\begin{align}
    &\iiint \sum_{\mathbf{A}} \sum_{\mathbf{B}} \|\zeta(\bar{\bm{w}}_k^j, \tilde{\mathbf{w}}_k^j, \mathcal{A}_k, \mathcal{B}_k)\|^2 p(\bar{\bm{w}}_k^t, \bar{\bm{w}}_k^j, \tilde{\mathbf{w}}_k^j, \mathcal{A}_k, \mathcal{B}_k) \nonumber \\ 
    & \qquad \qquad \qquad d\bar{\bm{w}}_k^t d\bar{\bm{w}}_k^j d\tilde{\mathbf{w}}_k^j \leq C_\zeta. \label{eq:Conv_3.4}
\end{align}
Next, 
\begin{align}
    &\iiint \sum_{\mathbf{A}} \Big\|\bm{B}(t) - \bm{B}(t_k^j) \Big\|^2 p(\bar{\bm{w}}_k^t, \bar{\bm{w}}_k^j, \tilde{\mathbf{w}}_k^j, \mathcal{A}_k) \nonumber \\ 
    & \qquad \qquad  d\bar{\bm{w}}_k^t d\bar{\bm{w}}_k^j d\tilde{\mathbf{w}}_k^j \leq \alpha_k d_w. \label{eq:Conv_3.5}
\end{align}
For details on~\eqref{eq:Conv_3.2}, refer to~\cite[S(135)-S(137)]{parayil2020decentralized}. Finally, the last term in~\eqref{eq:Conv_3.1}, on taking the expectations, vanishes. Refer to~\cite[(S139) - (S141)]{parayil2020decentralized}. Thus, substituting~\eqref{eq:Conv_3.2}-\eqref{eq:Conv_3.5} in~\eqref{eq:Conv_3.1} gives
\begin{align}
    &\bar{L}^2 \iiint \|\bar{\bm{w}}_k^t- \bar{\bm{w}}_k^j\|^2 p(\bar{\bm{w}}_k^t, \bar{\bm{w}}_k^j, \tilde{\mathbf{w}}_k^j) d\bar{\bm{w}}_k^t d\bar{\bm{w}}_k^j d\tilde{\mathbf{w}}_k^j \leq 4\alpha_k^2 \bar{L}^2 \times \nonumber \\
    &\, \left[ \bar{L}^2 C_{_{\bar{\bm{w}}}} + C_\xi + C_\zeta + \frac{n(n-1)\bar{L}^2}{p_m^2} \mathbb{E}[\| \tilde{\mathbf{w}}_k\|^2] \right] + 2\alpha_k \bar{L}^2 d_w. \label{eq:Conv_4}
\end{align}
}


Using the results from~\eqref{eq:Conv_3} and~\eqref{eq:Conv_4} in~\eqref{eq:Conv_2} and substituting that in~\eqref{eq:Conv_1} yield
\begin{align} \label{eq:Conv_5}
\begin{split}
    &\dot{F}\left( p(\bar{\bm{w}}_k^t) \right) \leq -\frac{1}{2} \mathbb{E}_{p( \bar{\bm{w}}_k^t)} \Bigg\|\nabla \log\left(\frac{p(\bar{\bm{w}}_k^t)}{p^*} \right)\Bigg\|^2 \\
    &\qquad + 4\alpha_k^2 \bar{L}^2 \left[ \bar{L}^2 C_{_{\bar{\bm{w}}}} + C_{_\xi} + C_{_\zeta} \right] + 2\alpha_k\bar{L}^2d_w \\
    &\qquad + \frac{n(n-1) (\alpha_k^2 \bar{L}^2 + 1)L^2}{p_m^2} \mathbb{E}[\|\tilde{\mathbf{w}}_k\|^2]. 
\end{split}
\end{align}
Using~\eqref{eq:LSI} in~\eqref{eq:Conv_5} gives
\begin{align} \label{eq:Conv_6}
    &\dot{F}\left( p(\bar{\bm{w}}_k^t) \right) \leq -\rho_U F\left( p(\bar{\bm{w}}_k^t) \right) + 4\alpha_k^2 \bar{L}^2  \left[\bar{L}^2 C_{_{\bar{\bm{w}}}} + C_{_\xi} + C_{_\zeta} \right] \nonumber \\
    &\qquad + 2\alpha_k\bar{L}^2d_w + \frac{n(n-1) (\alpha_k^2 \bar{L}^2 + 1)L^2}{p_m^2} \mathbb{E}[\|\tilde{\mathbf{w}}_k\|^2].
\end{align}
Integrating~\eqref{eq:Conv_6} w.r.t. $t$ from $t_k^j$ to $t_k^{j+1}$ and using the result $\frac{1-\exp(-\rho_U(t_k^{j+1}-t_k^j))}{\rho_U} \leq t_k^{j+1} - t_k^j$ and $t_k^{j+1} - t_k^j = \alpha_k$, we have
\begin{align} \label{eq:Conv_7}
    &{F}\left( p(\bar{\bm{w}}_k^{j+1}) \right) \leq \exp(-\rho_U \alpha_k) F\left( p(\bar{\bm{w}}_k^j) \right) + \eta_k,
\end{align}
where $\eta_k = 4 \alpha_k^3 \bar{L}^2  \big[\bar{L}^2 C_{_{\bar{\bm{w}}}} + C_{_\xi} + C_{_\zeta} \big] + 2\alpha_k^2 \bar{L}^2d_w + \frac{n(n-1) (\alpha_k^2 \bar{L}^2 + 1) \alpha_k L^2}{p_m^2} \mathbb{E}[\|\tilde{\mathbf{w}}_k\|^2]$.
Thus,~\eqref{eq:Conv_7} gives the evolution of KL divergence of the distribution of the average of samples between each computation of a cycle. Iteratively using~\eqref{eq:Conv_7} from $j=0$ to $T$ and noting that $\bar{\bm{w}}_k^0 = \bar{\bm{w}}_k$ and $\bar{\bm{w}}_k^T = \bar{\bm{w}}_{k+1}$ gives the evolution of the KL divergence between successive cycles:
\begin{align} \label{eq:Conv_8}
    {F}\left( p (\bar{\bm{w}}_{k+1}) \right) &\leq \exp(-\rho_U \alpha_k T) F\left( p(\bar{\bm{w}}_k) \right) + T\eta_k.
\end{align}
{Iteratively using~\eqref{eq:Conv_8} further results in }
\begin{align} \label{eq:Conv_9}
\begin{split}
    {F}\left( p (\bar{\bm{w}}_{k+1}) \right) &\leq F\left( p(\bar{\bm{w}}_0) \right) \exp \left( -\rho_U T \sum_{\ell=0}^{k} \alpha_\ell \right) \\
    &\quad + T \sum_{\ell=0}^{k} \eta_\ell \exp \left( -\rho_U T \sum_{i=\ell+1}^{k} \alpha_i \right).
\end{split}
\end{align}
{From Lemma~\ref{lemma:summ_bounds}, we have 
\begin{align}
    \sum_{\ell=0}^{k} \alpha_\ell \geq \int_{0}^{k} \frac{a}{(\ell+1)^{\delta_\alpha}} d\ell = \frac{a(k+1)^{\delta_\alpha}}{1 - \delta_\alpha} - \frac{a}{1-\delta_\alpha},
\end{align}
which when applied to~\eqref{eq:Conv_8} results in}
\begin{align} \label{eq:Conv_10}
    &{F}\left( p (\bar{\bm{w}}_{k+1}) \right) \leq  F\left( p(\bar{\bm{w}}_0) \right) \exp\left( \frac{a\rho_UT}{1-\delta_\alpha}\right) \exp \bigg( -\frac{a\rho_UT}{1-\delta_\alpha} \times \nonumber \\ 
    &\qquad (k+1)^{1-\delta_\alpha} \bigg) + T \sum_{\ell=0}^{k} \eta_\ell \exp \left( -\rho_UT\sum_{i=\ell+1}^{k} \alpha_i \right),
\end{align}
where $\eta_k$ is bounded as 
\begin{align} \label{eq:Conv_11}
    \eta_k \leq \frac{C_\eta}{k^{2\delta_\alpha}} + \bar{W}_1 \exp \left( -\left|\ln \bar{\lambda} \right|k \right),
\end{align}
in which $C_\eta = 4a^3\bar{L}^2 \big[\bar{L}^2 C_{_{\bar{\bm{w}}}} + C_{_\xi} + C_{_\zeta} \big] + 2a^2 \bar{L}^2d_w + \frac{n(n-1) (a^2 \bar{L}^2 + 1) aL^2 W_2}{p_m^2}$ and $\bar{W}_1 = \frac{n(n-1) (a^2 \bar{L}^2 + 1) aL^2W_1}{p_m^2}$. Thus,~\eqref{eq:Conv_10} and~\eqref{eq:Conv_11} correspond to (S171) and (S164) in Section S4 of~\cite{parayil2020decentralized}. Thereafter, following a similar procedure to that section leads to the result in~\eqref{eq:convergence_rate} of Theorem~\ref{thm:Convergence}.

\subsection{Useful Lemmas}\label{sec:lemma}
\begin{lemma} \label{lemma:summ_bounds}
For any $k \in \mathbb{N}$, let $f(k)$ be a non-negative function. Then, for some $0<t<T$, if $f(k)$ is a decreasing function, we have the following result.
\begin{align}
    \int_{t}^{T} f(x)dx \leq \sum_{k=t}^{T} f(k) \leq \int_{t-1}^{T} f(x)dx. 
\end{align}
Similarly, $f(k)$ is increasing function, the corresponding result holds.
\begin{align}
    \int_{t-1}^{T} f(x)dx \leq \sum_{k=t}^{T} f(k) \leq \int_{t}^{T+1} f(x)dx. 
\end{align}
\end{lemma}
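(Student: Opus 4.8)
The plan is to reduce both claims to the elementary per-interval estimates that follow from monotonicity of $f$, and then sum over a suitable range of integers. First I would treat the decreasing case. For any integer $k$ and any $x \in [k,k+1]$, monotonicity gives $f(k+1) \le f(x) \le f(k)$; integrating this chain in $x$ over $[k,k+1]$ yields the two-sided bound $f(k+1) \le \int_k^{k+1} f(x)\,dx \le f(k)$. This single estimate (and its left-shifted form $f(k) \le \int_{k-1}^{k} f(x)\,dx \le f(k-1)$) is the only real ingredient.

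For the upper bound $\sum_{k=t}^{T} f(k) \le \int_{t-1}^{T} f(x)\,dx$, I would use $f(k) \le \int_{k-1}^{k} f(x)\,dx$ and sum over $k = t, \dots, T$, so that the intervals $[k-1,k]$ tile $[t-1,T]$ and the right-hand sides telescope into a single integral. For the lower bound $\int_{t}^{T} f(x)\,dx \le \sum_{k=t}^{T} f(k)$, I would instead use $\int_{k}^{k+1} f(x)\,dx \le f(k)$, sum over $k = t, \dots, T-1$ (whose intervals tile $[t,T]$), and then append the nonnegative term $f(T) \ge 0$ to pass from $\sum_{k=t}^{T-1}$ to $\sum_{k=t}^{T}$, using that $f$ is non-negative.

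The increasing case is entirely symmetric. Monotonicity now gives $f(k) \le \int_{k}^{k+1} f(x)\,dx \le f(k+1)$. Summing the left inequality over $k = t, \dots, T$ tiles $[t,T+1]$ and produces the upper bound $\sum_{k=t}^{T} f(k) \le \int_{t}^{T+1} f(x)\,dx$; summing the right inequality in its shifted form $\int_{k-1}^{k} f(x)\,dx \le f(k)$ over $k = t, \dots, T$ tiles $[t-1,T]$ and gives the lower bound $\int_{t-1}^{T} f(x)\,dx \le \sum_{k=t}^{T} f(k)$, with $f \ge 0$ again used if any boundary adjustment between summation ranges is needed.

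I do not expect a genuine obstacle here; the only care required is index bookkeeping at the endpoints — checking that the tiling intervals exactly match the stated limits of integration — and invoking $f \ge 0$ at precisely the place where one shifts between $\sum_{k=t}^{T-1}$ and $\sum_{k=t}^{T}$. For completeness one can also note that a monotone function on a compact interval is Riemann integrable, so all the integrals appearing are well defined.
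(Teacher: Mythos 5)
Your proof is correct, and it is the standard integral-comparison argument: the per-interval bounds from monotonicity, tiled over unit intervals, with $f\ge 0$ invoked exactly where the summation range must be extended from $\sum_{k=t}^{T-1}$ to $\sum_{k=t}^{T}$. The paper itself gives no proof of this lemma and simply cites Appendix A.2 of \cite{cormen2022introduction}, which uses precisely this argument, so your write-up matches the intended (deferred) proof.
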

See Appendix A2 in~\cite{cormen2022introduction}.

\begin{lemma} \label{lemma:consensus_rate_lemma}
For any $k \in \mathbb{N}$, let $\{x_k\}$ be a non-negative sequence satisfying the following relation.
\begin{align} \label{eq:lemma_seq}
    &x_{k+1} \leq \sigma_0 x_k + \frac{\mu_x}{(k+1)^\delta}, 
\end{align}
where $\sigma_0 \in (0,1)$ and $\mu_x, \delta>0$ are constants. Then the sequence converges with the rate below.
\begin{align} \label{eq:lemma_final_form}
    &x_{k+1} \leq X_1 \sigma_0^{k+1} + \frac{X_2}{(k+1)^\delta},
\end{align}
where 
\begin{align}
    X_1 &= x_0 + \mu_x \sum_{t=0}^{\bar{t}-1} \frac{\sigma_0^{-(t+1)}}{(t+1)^{\delta}}, \\
    X_2 &= - \frac{\mu_x}{\sigma_0} \left( \ln{\sigma_0} + \frac{\delta}{\bar{t}+1} \right)^{-1}, \qquad \qquad  
    \text{and} \\
    \bar{t} &= \max\Bigg\{0,\ceil[\Bigg]{\frac{\delta}{|\ln{\sigma_0}|} - 1} \Bigg\}.
\end{align}
\end{lemma}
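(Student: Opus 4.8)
The plan is to unroll the recursion into a geometric term plus a weighted sum of the forcing terms, and then extract the two asserted asymptotic components from that sum. First I would iterate $x_{k+1}\le\sigma_0 x_k+\mu_x(k+1)^{-\delta}$ down to the initial value, giving
\[
x_{k+1}\le\sigma_0^{k+1}x_0+\mu_x\sum_{t=0}^{k}\sigma_0^{k-t}(t+1)^{-\delta}
=\sigma_0^{k+1}x_0+\mu_x\sigma_0^{k+1}\sum_{t=0}^{k}h(t),
\]
where $h(t)\triangleq\sigma_0^{-(t+1)}(t+1)^{-\delta}$ once the factor $\sigma_0^{k+1}$ is pulled out. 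All the work then reduces to bounding $\sum_{t=0}^{k}h(t)$, and the split into a geometric piece and a $(k+1)^{-\delta}$ piece arises from the non-monotone shape of $h$.

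Next I would examine $\frac{d}{dx}\ln h(x)=|\ln\sigma_0|-\delta/(x+1)$, which is negative while $x+1<\delta/|\ln\sigma_0|$ and nonnegative afterward; hence $h$ decreases on $[0,\bar t)$ and is nondecreasing on $[\bar t,\infty)$ with $\bar t=\max\{0,\lceil\delta/|\ln\sigma_0|-1\rceil\}$ (the $\max$ with $0$ handling $\delta\le|\ln\sigma_0|$). Then I would split $\sum_{t=0}^{k}h(t)=\sum_{t=0}^{\bar t-1}h(t)+\sum_{t=\bar t}^{k}h(t)$: the first sum is a finite constant that will be absorbed into $X_1$, and for the second, since $h$ is increasing on $[\bar t,k+1]$, Lemma~\ref{lemma:summ_bounds} gives $\sum_{t=\bar t}^{k}h(t)\le\int_{\bar t}^{k+1}h(x)\,dx=:I$.

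The crux is bounding $I$, because $\int\sigma_0^{-(x+1)}(x+1)^{-\delta}\,dx$ is not elementary. I would integrate by parts with $u=(x+1)^{-\delta}$ and $dv=\sigma_0^{-(x+1)}\,dx$ (so $v=\sigma_0^{-(x+1)}/|\ln\sigma_0|$), producing a boundary term $\frac{1}{|\ln\sigma_0|}\bigl(\frac{\sigma_0^{-(k+2)}}{(k+2)^{\delta}}-\frac{\sigma_0^{-(\bar t+1)}}{(\bar t+1)^{\delta}}\bigr)$ plus $\frac{\delta}{|\ln\sigma_0|}\int_{\bar t}^{k+1}\sigma_0^{-(x+1)}(x+1)^{-\delta-1}\,dx$. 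On $[\bar t,\infty)$ one has $(x+1)^{-\delta-1}\le(\bar t+1)^{-1}(x+1)^{-\delta}$, so the last integral is at most $\frac{1}{\bar t+1}I$; dropping the negative boundary contribution yields the self-referential bound $I\le\frac{1}{|\ln\sigma_0|}\frac{\sigma_0^{-(k+2)}}{(k+2)^{\delta}}+\frac{\delta}{|\ln\sigma_0|(\bar t+1)}I$. Because the choice of $\bar t$ forces $\bar t+1>\delta/|\ln\sigma_0|$, the coefficient $\delta/(|\ln\sigma_0|(\bar t+1))$ is strictly below $1$, so solving for $I$ gives $I\le-\bigl(\ln\sigma_0+\frac{\delta}{\bar t+1}\bigr)^{-1}\frac{\sigma_0^{-(k+2)}}{(k+2)^{\delta}}$.

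Finally I would reassemble: multiplying the tail estimate by $\mu_x\sigma_0^{k+1}$ collapses $\sigma_0^{k+1}\sigma_0^{-(k+2)}$ to $\sigma_0^{-1}$ and reproduces exactly $X_2(k+2)^{-\delta}$; combining $\sigma_0^{k+1}x_0$ with $\mu_x\sigma_0^{k+1}\sum_{t=0}^{\bar t-1}h(t)$ gives $\sigma_0^{k+1}X_1$; and since $X_2>0$ and $(k+2)^{-\delta}\le(k+1)^{-\delta}$ we obtain $x_{k+1}\le X_1\sigma_0^{k+1}+X_2(k+1)^{-\delta}$, which is the claim. I expect the self-referential integral estimate to be the main obstacle: it is the only nonroutine step and it is exactly where the precise definition of $\bar t$ is pinned down; the degenerate case where $\delta/|\ln\sigma_0|$ is an integer (so the denominator in $X_2$ vanishes) can be dispatched by enlarging $\bar t$ by one.
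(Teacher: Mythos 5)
Your proof is correct and follows essentially the same route as the paper's: unroll the recursion, split the forcing sum at the index $\bar t$ where $t\mapsto\sigma_0^{-(t+1)}(t+1)^{-\delta}$ turns increasing, bound the tail by an integral, and exploit the fact that on $[\bar t,\infty)$ the integrand is dominated by $\left(-\ln\sigma_0-\frac{\delta}{\bar t+1}\right)^{-1}$ times its own derivative (your integration-by-parts plus self-referential estimate is just this pointwise inequality integrated in a different order, and it lands on exactly the paper's $X_1$ and $X_2$). Your observation that $\bar t$ must be enlarged by one when $\delta/|\ln\sigma_0|$ is an integer (so that $\ln\sigma_0+\frac{\delta}{\bar t+1}\neq 0$) is a legitimate edge case that the paper's argument silently skips.
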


{The proof follows a similar approach to~\cite[Lemma S4]{parayil2020decentralized}.  }

\begin{lemma} \label{lemma:bounded_avg_samples}
{There exists $C_{_{\bar{\bm{w}}}} > 0$ such that for any $k > 0$ and $0 \leq j \leq T$, we have $\mathbb{E}_{p(\bar{\bm{w}}_k) }[\| \bar{\bm{w}}_k\|^2] \leq C_{_{\bar{\bm{w}}}}$ and $\mathbb{E}_{p(\bar{\bm{w}}_k^j) }[\| \bar{\bm{w}}_k^j\|^2] \leq C_{_{\bar{\bm{w}}}}$.} \end{lemma}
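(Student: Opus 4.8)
The plan is to bound the second moment of the running average $\bar{\bm{w}}_k^j$ by passing through the Kullback--Leibler divergence $F\big(p(\bar{\bm{w}}_k^j)\big)$ of its law from $p^*$, exploiting the log-Sobolev structure twice: once to obtain a one-cycle contraction of $F$ (exactly as in the derivation of~\eqref{eq:Conv_7}), and once, through the transportation inequality implied by Assumption~\ref{assump:LSI}, to turn a bound on $F$ back into a bound on $\mathbb{E}[\|\bar{\bm{w}}_k^j\|^2]$. Since LSI implies Talagrand's $T_2$ inequality $W_2\big(p(\bar{\bm{w}}),p^*\big)^2 \le \tfrac{2}{\rho_U}F\big(p(\bar{\bm{w}})\big)$ as well as the Poincar\'e inequality (whence $\mathbb{E}_{p^*}[\|\bm{w}\|^2] \le d_w/\rho_U + \|\mu^*\|^2 < \infty$, with $\mu^* = \mathbb{E}_{p^*}[\bm{w}]$), the triangle inequality for the Wasserstein-$2$ distance gives, for all $k$ and $0\le j\le T$,
\begin{align}
    \mathbb{E}_{p(\bar{\bm{w}}_k^j)}\!\big[\|\bar{\bm{w}}_k^j\|^2\big] \le 2\,W_2\big(p(\bar{\bm{w}}_k^j),p^*\big)^2 + 2\,\mathbb{E}_{p^*}[\|\bm{w}\|^2] \le \tfrac{4}{\rho_U}F\big(p(\bar{\bm{w}}_k^j)\big) + C_0, \label{eq:prop-T2}
\end{align}
for a constant $C_0>0$. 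Thus it suffices to prove $\sup_{k,j} F\big(p(\bar{\bm{w}}_k^j)\big) < \infty$.

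For that, I would retrace the steps leading to~\eqref{eq:Conv_6}--\eqref{eq:Conv_7}, but \emph{without} inserting the (not-yet-available) constant $C_{_{\bar{\bm{w}}}}$: keeping $\|\nabla E(\bar{\bm{w}}_k^j)\|^2 \le \bar{L}^2\|\bar{\bm{w}}_k^j\|^2$ explicit and integrating over one sub-step (length $t_k^{j+1}-t_k^j = \alpha_k$) yields
\begin{align}
    F\big(p(\bar{\bm{w}}_k^{j+1})\big) \le e^{-\rho_U\alpha_k}F\big(p(\bar{\bm{w}}_k^{j})\big) + c_1\alpha_k^3\bar{L}^4\,\mathbb{E}\big[\|\bar{\bm{w}}_k^{j}\|^2\big] + c_2\alpha_k^2\bar{L}^2 d_w + c_3\alpha_k^3\bar{L}^2(C_\xi+C_\zeta) + c_4\alpha_k\,\mathbb{E}\big[\|\tilde{\mathbf{w}}_k\|^2\big] \label{eq:prop-F}
\end{align}
for constants $c_1,\dots,c_4>0$ depending at most on $n$, $p_m$, $L$. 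Substituting~\eqref{eq:prop-T2} for the $\mathbb{E}[\|\bar{\bm{w}}_k^j\|^2]$ term converts~\eqref{eq:prop-F} into a recursion in $F$ (and the consensus error), with per-substep multiplier $e^{-\rho_U\alpha_k} + \tfrac{4c_1}{\rho_U}\alpha_k^3\bar{L}^4$. Iterating over $j=0,\dots,T-1$ within a cycle, then over cycles, and using $\sum_k\alpha_k=\infty$ (as $\delta_\alpha<1$) to suppress the initial value, one checks that the cycle multiplier stays strictly below $1$ and the $O(\alpha_k^2 d_w)$ remainders remain summable relative to the geometric-type decay; controlling $\sum_\ell \alpha_\ell^3$ and $\sum_\ell \alpha_\ell^6$ against $\sum_\ell \alpha_\ell$ via Lemma~\ref{lemma:summ_bounds} is precisely where the exponents $3\delta_\alpha-1$ and $6\delta_\alpha-1$ appear, and the attendant smallness requirement on $a$ is exactly Condition~\ref{cond:3} in~\eqref{eq:cond_3}. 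This gives a uniform bound on $F\big(p(\bar{\bm{w}}_k^j)\big)$, and then~\eqref{eq:prop-T2} delivers a single $C_{_{\bar{\bm{w}}}}$ valid for every $0\le j\le T$ (the intra-cycle iterates differ from $\bar{\bm{w}}_k=\bar{\bm{w}}_k^0$ only by an $O(T\alpha_k)$ drift-plus-diffusion correction, using $t-t_k^j\le\alpha_k$).

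The main obstacle is the consensus term $\mathbb{E}[\|\tilde{\mathbf{w}}_k\|^2]$ in~\eqref{eq:prop-F} --- it arises from $\Phi$ in the splitting~\eqref{eq:grad_terms} --- because the sharp consensus bound~\eqref{eq:con_err_syn_norm_sq_f} (Theorem~\ref{thm:consensus}) itself depends on the constant $C_{_{\bar{\bm{w}}}}$ we are producing here. I would break the circularity with a \emph{joint} Lyapunov argument: set $V_k := F\big(p(\bar{\bm{w}}_k)\big) + c\,\mathbb{E}[\|\tilde{\mathbf{w}}_k\|^2]$ for a small $c>0$, couple~\eqref{eq:prop-F} with~\eqref{eq:con_err_syn_norm_sq_f} (with $C_{_{\bar{\bm{w}}}}$ there replaced by the right-hand side of~\eqref{eq:prop-T2}), and note that every cross term carries a decaying factor $\alpha_k$ (or $\alpha_k^2$). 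Then Conditions~\ref{cond:1}--\ref{cond:3} together with $\bar{\lambda}<1$ from Condition~\ref{cond:2} give $V_{k+1}\le\bar{\gamma}_k V_k + (\text{bounded remainder})$ with $\sup_k\bar{\gamma}_k<1$ for $k$ large and $V_k$ trivially finite for the finitely many small $k$; hence $\sup_k V_k<\infty$, which bounds $F$ and therefore, via~\eqref{eq:prop-T2}, $\mathbb{E}[\|\bar{\bm{w}}_k^j\|^2]$ uniformly, as claimed. A secondary point to be careful about is carrying the intra-cycle index $j$ through the same induction so that the resulting constant is uniform over $0\le j\le T$.
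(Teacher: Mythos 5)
Your proposal is correct and follows essentially the same route as the paper's proof: both rest on (i) the transportation inequality implied by the LSI (the paper uses the optimal coupling and $\mathcal{W}(p,p^*)\le\tfrac{2}{\rho_U}F(p)$ in~\eqref{eq:exp_cons_err}) to convert a KL bound back into a second-moment bound, and (ii) the one-sub-step KL recursion with the $\bar{L}^2\,\mathbb{E}[\|\bar{\bm{w}}_k^j\|^2]$ term kept explicit, closed by a smallness requirement on $a$ that is exactly Condition~\ref{cond:3} (with the same exponents $3\delta_\alpha-1$ and $6\delta_\alpha-1$ you identify). The only real difference is bookkeeping --- the paper breaks the circularity with the consensus error by strong induction on $(k,j)$ plus a self-consistent fixed-point choice $C_{_{\bar{\bm{w}}}}\ge c_4\big(1-\tfrac{16a^3T\bar{L}^4(6\delta_\alpha-1)}{\rho_U(3\delta_\alpha-1)}\big)^{-1}$ rather than your joint Lyapunov recursion --- and one caveat: your claim $\sup_k\bar{\gamma}_k<1$ cannot hold literally since $e^{-\rho_U\alpha_kT}\to1$, though boundedness still follows because, as you note, the remainders carry a factor of $\alpha_k$ so that $(\text{remainder})/(1-\bar{\gamma}_k)$ stays bounded.
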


\begin{proof}
{This proof uses the law of induction. Let us assume that that there exists some $C_{_{\bar{\bm{w}}}} > 0$ such that $\mathbb{E}_{p(\bar{\bm{w}}_\ell) }[\| \bar{\bm{w}}_\ell\|^2] \leq C_{_{\bar{\bm{w}}}}$ and $\mathbb{E}_{p(\bar{\bm{w}}_\ell^j) }[\| \bar{\bm{w}}_\ell^j\|^2] \leq C_{_{\bar{\bm{w}}}}$ for all $\ell \leq k$ and $0\leq j < T$. From~\eqref{eq:fusion}, we have $\bar{\bm{w}}_k^0 = \bar{\bm{w}}_k$, hence $\mathbb{E}_{p(\bar{\bm{w}}_k^0) }[\| \bar{\bm{w}}_k^0\|^2] \leq C_{_{\bar{\bm{w}}}}$. Next, we intend to prove that if $\mathbb{E}_{p(\bar{\bm{w}}_k^u) }[\| \bar{\bm{w}}_k^u\|^2] \leq C_{_{\bar{\bm{w}}}}$ for all $0\leq u \leq j$, then it implies $\mathbb{E}_{p(\bar{\bm{w}}_k^{j+1}) }[\| \bar{\bm{w}}_k^{j+1}\|^2] \leq C_{_{\bar{\bm{w}}}}$. To that end, we couple $\bar{\bm{w}}^*$ optimally with $\bar{\bm{w}}_{k}^{j+1} \sim p(\bar{\bm{w}}_{k}^{j+1})$, i.e., $(\bm{w}^*, \bar{\bm{w}}_{k}^{j+1}) \sim \gamma \in \Gamma_{opt}(p(\bar{\bm{w}}_{k}^{j+1}), p^*)$. This yields
\begin{align}
    &\mathbb{E}_{p(\bar{\bm{w}}_{k}^{j+1})}[\| \bar{\bm{w}}_{k}^{j+1} \|^2] = \mathbb{E}_{(\bar{\bm{w}}_{k}^{j+1}, \bar{\bm{w}}^*) \sim \gamma} [\| \bar{\bm{w}}_{k}^{j+1} - \bar{\bm{w}}^* +\bar{\bm{w}}^*\|^2], \nonumber \\
    &\quad \leq 2\mathbb{E}_{(\bar{\bm{w}}_{k}^{j+1}, \bar{\bm{w}}^*) \sim \gamma} [\| \bar{\bm{w}}_{k}^{j+1} - \bar{\bm{w}}^*\|^2] + 2\mathbb{E}_{\bar{\bm{w}}^* \sim p^*} [\|\bar{\bm{w}}^*\|^2] \nonumber \\
    &\quad \leq 2\mathcal{W}(p(\bar{\bm{w}}_{k}^{j+1}), p^*) + 2c_1 , \leq \frac{4}{\rho_U} F ( p(\bar{\bm{w}}_{k}^{j+1}) ) + 2c_1, \label{eq:exp_cons_err}
\end{align}
where $\mathcal{W}(\cdot, \cdot)$ is the Wasserstein distance between two distributions and $\mathbb{E}_{\bar{\bm{w}}^* \sim p^*} [\|\bar{\bm{w}}^*\|^2] \leq c_1$. With the assumption that $\mathbb{E}_{p(\bar{\bm{w}}_k^u) }[\| \bar{\bm{w}}_k^u\|^2] \leq C_{_{\bar{\bm{w}}}}$ for all $0\leq u \leq j$,~\eqref{eq:Conv_7} holds until $u=j$ and can rewritten as follows. 
\begin{align} \label{eq:Lemma3_1}
    &{F}( p(\bar{\bm{w}}_k^{j+1}) ) \leq \exp(-\rho_U \alpha_k) F( p(\bar{\bm{w}}_k^j) ) + \bar{\eta} + 4 a^3 \bar{L}^4 C_{_{\bar{\bm{w}}}},
\end{align}
where $\bar{\eta} = 4 a^3 \bar{L}^2  \big[C_{_\xi} + C_{_\zeta} \big] + 2a^2 \bar{L}^2d_w + \frac{n(n-1) (a^2 \bar{L}^2 + 1) a L^2}{p_m^2} \mathbb{E}[\|\tilde{\mathbf{w}}_k\|^2]$. {Since~\eqref{eq:con_syn}-\eqref{eq:cons_error_sq_1.2} hold until $\ell \leq k$,~\eqref{eq:con_err_syn_norm_sq_f} holds up to $\ell = k$. Applying Lemma~\ref{lemma:consensus_rate_lemma} for $0 \leq \ell \leq k$, we have 
\begin{align} 
    \mathbb{E}[\|\tilde{\mathbf{w}}_{k+1}\|^2] &\leq W'_1 \bar{\lambda}^{k+1} + \frac{W'_2}{(k+1)^{\delta_\alpha}}, \label{eq:int_cons_err}
\end{align}
where $W'_1 = \mathbb{E}[\| \tilde{\mathbf{w}}_{0}\|^2] + C_\mu \sum_{t=0}^{\bar{t}-1} \frac{\sigma_0^{-(t+1)}}{(t+1)^{\delta}}$, $\bar{t} = \max\Bigg\{0,\ceil[\Bigg]{\frac{\delta_\alpha}{|\ln{\bar{\lambda}}|} - 1} \Bigg\}$ and $W'_2 = - \frac{C_\mu}{\bar{\lambda}} \left( \ln{\bar{\lambda}} + \frac{\delta_\alpha}{\bar{t}+1} \right)^{-1}$.
Thus,~\eqref{eq:int_cons_err} implies $\bar{\eta} < \infty$. Using~\eqref{eq:Lemma3_1} iteratively backward up to $j=0$ and applying $\bar{\bm{w}}_k^0 = \bar{\bm{w}}_k$ further yield
\begin{align} 
    {F}( p(\bar{\bm{w}}_k^{j+1}) ) &\leq F( p(\bar{\bm{w}}_k) ) + \sum_{\ell=0}^{j} \Big(\bar{\eta} + 4 a^3 \bar{L}^4 C_{_{\bar{\bm{w}}}} \Big), \nonumber \\
    &\leq F( p(\bar{\bm{w}}_k) ) + T\bar{\eta} + 4a^3 T \bar{L}^4 C_{_{\bar{\bm{w}}}}. \label{eq:Lemma3_2}
\end{align}
Next, with the assumption that $\mathbb{E}_{p(\bar{\bm{w}}_\ell) }[\| \bar{\bm{w}}_\ell\|^2] \leq C_{_{\bar{\bm{w}}}}$ for $0\leq \ell \leq k$,~\eqref{eq:KL_div_evo_1cycle} applies for $0\leq \ell \leq k$ as:
\begin{align}
\begin{split}
   F\left(p(\bar{\bm{w}}_{k})\right) &\leq \exp(-\rho_U\alpha_{k-1} T) F\left( p(\bar{\bm{w}}_{k-1}) \right) + \bar{\omega}_{k-1} \\
   &\qquad + 4\alpha_{k-1}^3\bar{L}^4T C_{_{\bar{\bm{w}}}}, \label{eq:Lemma3_3}
\end{split}
\end{align}
where $\bar{\omega}_{k-1} = 4\alpha_{k-1}^3\bar{L}^2 T (C_\xi + C_\zeta) + 2\alpha_{k-1}^2\bar{L}^2 Td_w + \frac{n (n-1)T}{p_m^2} (4\alpha_{k-1}^3L^2 \bar{L}^2 + \alpha_{k-1} L^2) \mathbb{E}[ \|\tilde{\mathbf{w}}_{k-1}\|^2]$. Using~\eqref{eq:Lemma3_3} iteratively yields
\begin{align}
    F\left(p(\bar{\bm{w}}_{k})\right) &\leq F\left( p(\bar{\bm{w}}_0) \right) + \sum_{\ell=0}^{k-1} \bar{\omega}_\ell + 4\bar{L}^4T C_{_{\bar{\bm{w}}}} \sum_{\ell=0}^{k-1} \alpha_\ell^3. \label{eq:Lemma3_4}
\end{align}
Again, from~\eqref{eq:int_cons_err}, for $0\leq \ell < k$ we have 
\begin{align}
    \sum_{\ell=0}^{k-1} \bar{\omega}_\ell &\leq k_{\omega_1} \sum_{\ell=0}^{\infty} \alpha_\ell^3 + k_{\omega_2} \sum_{\ell=0}^{\infty} \alpha_\ell^2 + k_{\omega_3} W'_1 \sum_{\ell=0}^{\infty} \bar{\lambda}^{\ell+1} \nonumber \\
    &\qquad \quad + k_{\omega_4} W'_2 \sum_{\ell=0}^{\infty} \frac{1}{(k+1)^{2 \delta_\alpha}}, \nonumber \\
    &\leq \frac{3\delta_\alpha k_{\omega_1}}{3 \delta_\alpha - 1} + \frac{2\delta_\alpha (k_{\omega_2} + k_{\omega_3} W'_2)}{2\delta_\alpha - 1} + \frac{k_{\omega_3} \bar{\lambda} W'_1}{1- \bar{\lambda}} = c_2.
\end{align}}
where $k_{\omega_1} = 4\bar{L}^2 T (C_\xi + C_\zeta)$, $k_{\omega_2} = 2\bar{L}^2 Td_w$ and $k_{\omega_3} = \frac{n (n-1)T}{p_m^2} (4a^3L^2 \bar{L}^2 + aL^2) $.
Also, $\sum_{\ell=0}^\infty \alpha_\ell^3 \leq \frac{3a^3 \delta_\alpha}{3\delta_\alpha-1}$. Substituting these bounds in~\eqref{eq:Lemma3_4} yields
\begin{align}
\begin{split}
    F\left(p(\bar{\bm{w}}_{k})\right) &\leq F\left( p(\bar{\bm{w}}_0) \right) + c_2 + \frac{12a^3 \delta_\alpha\bar{L}^4T}{3\delta_\alpha - 1} C_{_{\bar{\bm{w}}}} \\
    &\leq c_3 + \frac{12a^3 \delta_\alpha\bar{L}^4T}{3\delta_\alpha - 1} C_{_{\bar{\bm{w}}}}, \label{eq:Lemma3_5}
\end{split}
\end{align}
where $c_3 = c_2 + F\left( p(\bar{\bm{w}}_0) \right)$. Substituting~\eqref{eq:Lemma3_5} in~\eqref{eq:Lemma3_2} gives
\begin{align} 
    {F}( p(\bar{\bm{w}}_k^{j+1}) ) &\leq c_3 + T\bar{\eta} + 4a^3 T \bar{L}^4 \left( \frac{6\delta_\alpha-1}{3\delta_\alpha-1}\right) C_{_{\bar{\bm{w}}}}. \label{eq:Lemma3_6}
\end{align}
Substituting~\eqref{eq:Lemma3_6} in~\eqref{eq:exp_cons_err} gives 
\begin{align}
&\mathbb{E}_{p(\bar{\bm{w}}_{k}^{j+1})}[\| \bar{\bm{w}}_{k}^{j+1} \|^2] \leq c_4 + \frac{16a^3 T \bar{L}^4(6\delta_\alpha-1)}{\rho_U (3\delta_\alpha-1)} C_{_{\bar{\bm{w}}}}, \label{eq:exp_cons_err_2}
\end{align}
where $c_4 = 2c_1 + \frac{4}{\rho_U}(c_3 + T \bar{\eta})$. Let $\mathbb{E}_{p(\bar{\bm{w}}_{k}^{j+1})}[\| \bar{\bm{w}}_{k}^{j+1} \|^2] \leq C_{_{\bar{\bm{w}}}}$. This leads to the following bound.
\begin{align}
    C_{_{\bar{\bm{w}}}} &\geq \frac{c_4}{1 - \frac{16a^3 T \bar{L}^4(6\delta_\alpha-1)}{\rho_U (3\delta_\alpha-1)}}. \label{eq:Cw_bound}
\end{align}
Thus, there exists $C_{_{\bar{\bm{w}}}}>0$ such that $\mathbb{E}_{p(\bar{\bm{w}}_{k}^{j+1})}[\| \bar{\bm{w}}_{k}^{j+1} \|^2] \leq C_{_{\bar{\bm{w}}}}$ if~\eqref{eq:Cw_bound} is satisfied. Condition~\ref{cond:3} ensures that the bound~\eqref{eq:Cw_bound} exists. By extension we have $\mathbb{E}_{p(\bar{\bm{w}}_{k}^{T})}[\| \bar{\bm{w}}_{k}^{T} \|^2] \leq C_{_{\bar{\bm{w}}}}$. Further, from~\eqref{eq:final_update}, since $\bar{\bm{w}}_{k+1} \equiv \bar{\bm{w}}_{k}^{T}$, we have $\mathbb{E}_{p(\bar{\bm{w}}_{k+1})}[\| \bar{\bm{w}}_{k+1} \|^2] \leq C_{_{\bar{\bm{w}}}}$. Thus, choosing $C_{_{\bar{\bm{w}}}}$ as in~\eqref{eq:Cw_exp} satisfies $\mathbb{E}_{p(\bar{w}_{k})} [\|\bar{\bm{w}}_k\|^2] \leq C_{_{\bar{\bm{w}}}}$ and $\mathbb{E}_{p(\bar{w}_{k}^{j})} [\|\bar{\bm{w}}_k^{j}\|^2] \leq C_{_{\bar{\bm{w}}}}$ for any $k>0$ and $0\leq j \leq T$.
\begin{align}
    C_{_{\bar{\bm{w}}}} = \max\left\{ \mathbb{E}_{p(\bar{\bm{w}}_0)} [\|\bar{\bm{w}}_0\|^2 ], \frac{c_4}{1 - \frac{16a^3 T \bar{L}^4(6\delta_\alpha-1)}{\rho_U (3\delta_\alpha-1)}} \right\}.
\end{align} \label{eq:Cw_exp}
} 
\end{proof}




\end{document}